\tikzstyle{none}=[inner sep=0pt]
\tikzstyle{label}=[fill=none, draw=none, inner sep=0pt, shape=circle, scale=0.6, tikzit fill={rgb,255: red,191; green,191; blue,191}, tikzit draw={rgb,255: red,191; green,191; blue,191}]
\tikzstyle{black}=[fill=black, draw=black, shape=circle, inner sep=0pt, minimum size=3pt, tikzit category=dots]
\tikzstyle{box}=[fill=white, draw=black, chamfered rectangle, chamfered rectangle corners={north east}, chamfered rectangle sep=2pt, inner sep=0pt, minimum height=1.4em, minimum width=1.4em, scale=0.8, tikzit category=boxes]
\tikzstyle{boxOp}=[fill=white, draw=black, chamfered rectangle, chamfered rectangle corners={north west}, chamfered rectangle sep=2pt, inner sep=0pt, minimum height=1.4em, minimum width=1.4em, scale=0.8, tikzit category=boxes]
\tikzstyle{map}=[fill=white, draw=black, shape=rounded rectangle, rounded rectangle left arc=none, minimum height=1.5em, minimum width=1.5em, scale=0.8, tikzit category=boxes]
\tikzstyle{mapOp}=[fill=white, draw=black, shape=rounded rectangle, rounded rectangle right arc=none, minimum height=1.5em, minimum width=1.5em, scale=0.8, tikzit category=boxes]
\tikzstyle{square}=[fill=white, draw=black, rectangle, inner sep=0pt, minimum height=1.4em, minimum width=1.4em, scale=0.8, tikzit category=boxes]
\tikzstyle{wire}=[-, tikzit category=wires]
\tikzset{baseline=-0.5ex,>=stealth'}
\tikzset{every picture/.append style={scale=0.5}}
\NewDocumentCommand{\boxCirc}{moo}{
    \begin{tikzpicture}
        \begin{pgfonlayer}{nodelayer}
            \node [style=box] (3) at (0, 0) {$#1$};
            \node [style=none] (6) at (-1, 0) {};
            \node [style=none] (7) at (1, 0) {};
            \IfNoValueF{#3}{
                \node [style=label] (4) at (-1.25, 0) {$#2$};
                \node [style=label] (5) at (1.25, 0) {$#3$};
            }
        \end{pgfonlayer}
        \begin{pgfonlayer}{edgelayer}
            \draw (6.center) to (3);
            \draw (7.center) to (3);
        \end{pgfonlayer}
    \end{tikzpicture}        
}
\NewDocumentCommand{\boxOpCirc}{moo}{
    \begin{tikzpicture}
        \begin{pgfonlayer}{nodelayer}
            \node [style=boxOp] (3) at (0, 0) {$#1$};
            \node [style=none] (6) at (-1, 0) {};
            \node [style=none] (7) at (1, 0) {};
            \IfNoValueF{#3}{
                \node [style=label] (4) at (-1.25, 0) {$#2$};
                \node [style=label] (5) at (1.25, 0) {$#3$};
            }
        \end{pgfonlayer}
        \begin{pgfonlayer}{edgelayer}
            \draw (6.center) to (3);
            \draw (7.center) to (3);
        \end{pgfonlayer}
    \end{tikzpicture}        
}
\NewDocumentCommand{\mapCirc}{moo}{
    \begin{tikzpicture}
        \begin{pgfonlayer}{nodelayer}
            \node [style=map] (3) at (0, 0) {$#1$};
            \node [style=none] (6) at (-1, 0) {};
            \node [style=none] (7) at (1, 0) {};
            \IfNoValueF{#3}{
                \node [style=label] (4) at (-1.25, 0) {$#2$};
                \node [style=label] (5) at (1.25, 0) {$#3$};
            }
        \end{pgfonlayer}
        \begin{pgfonlayer}{edgelayer}
            \draw (6.center) to (3);
            \draw (7.center) to (3);
        \end{pgfonlayer}
    \end{tikzpicture}        
}
\NewDocumentCommand{\mapOpCirc}{moo}{
    \begin{tikzpicture}
        \begin{pgfonlayer}{nodelayer}
            \node [style=mapOp] (3) at (0, 0) {$#1$};
            \node [style=none] (6) at (-1, 0) {};
            \node [style=none] (7) at (1, 0) {};
            \IfNoValueF{#3}{
                \node [style=label] (4) at (-1.25, 0) {$#2$};
                \node [style=label] (5) at (1.25, 0) {$#3$};
            }
        \end{pgfonlayer}
        \begin{pgfonlayer}{edgelayer}
            \draw (6.center) to (3);
            \draw (7.center) to (3);
        \end{pgfonlayer}
    \end{tikzpicture}        
}
\NewDocumentCommand{\sqCirc}{moo}{
    \begin{tikzpicture}
        \begin{pgfonlayer}{nodelayer}
            \node [style=square] (3) at (0, 0) {$#1$};
            \node [style=none] (6) at (-1, 0) {};
            \node [style=none] (7) at (1, 0) {};
            \IfNoValueF{#3}{
                \node [style=label] (4) at (-1.25, 0) {$#2$};
                \node [style=label] (5) at (1.25, 0) {$#3$};
            }
        \end{pgfonlayer}
        \begin{pgfonlayer}{edgelayer}
            \draw (6.center) to (3);
            \draw (7.center) to (3);
        \end{pgfonlayer}
    \end{tikzpicture}        
}
\NewDocumentCommand{\idCirc}{o}{
    \begin{tikzpicture}
        \begin{pgfonlayer}{nodelayer}
            \IfNoValueF{#1}{
                \node [style=label] (4) at (-1, 0) {$#1$};
                \node [style=label] (5) at (1, 0) {$#1$};
            }
            \node [style=none] (6) at (-0.75, 0) {};
            \node [style=none] (7) at (0.75, 0) {};
        \end{pgfonlayer}
        \begin{pgfonlayer}{edgelayer}
            \draw (6.center) to (7.center);
        \end{pgfonlayer}
    \end{tikzpicture}
}
\NewDocumentCommand{\copyCirc}{o}{
    \begin{tikzpicture}
        \begin{pgfonlayer}{nodelayer}
            \IfNoValueF{#1}{
                \node [style=label] (4) at (-1, 0) {$#1$};
                \node [style=label] (5) at (1, 0.5) {$#1$};
                \node [style=label] (10) at (1, -0.5) {$#1$};
            }
            \node [style=none] (6) at (-0.75, 0) {};
            \node [style=black] (7) at (0, 0) {};
            \node [style=none] (8) at (0.65, 0.5) {};
            \node [style=none] (9) at (0.65, -0.5) {};
            \node [style=none] (11) at (0.75, 0.5) {};
            \node [style=none] (12) at (0.75, -0.5) {};
        \end{pgfonlayer}
        \begin{pgfonlayer}{edgelayer}
            \draw (6.center) to (7);
            \draw [bend right] (8.center) to (7);
            \draw [bend right] (7) to (9.center);
            \draw (11.center) to (8.center);
            \draw (12.center) to (9.center);
        \end{pgfonlayer}
    \end{tikzpicture}             
}
\NewDocumentCommand{\discardCirc}{o}{
    \begin{tikzpicture}
        \begin{pgfonlayer}{nodelayer}
            \IfNoValueF{#1}{
                \node [style=label] (4) at (-1, 0) {$#1$};
            }
            \node [style=none] (6) at (-0.75, 0) {};
            \node [style=black] (7) at (0, 0) {};
        \end{pgfonlayer}
        \begin{pgfonlayer}{edgelayer}
            \draw (6.center) to (7);
        \end{pgfonlayer}
    \end{tikzpicture}
}
\NewDocumentCommand{\cocopyCirc}{o}{
    \begin{tikzpicture}
        \begin{pgfonlayer}{nodelayer}
            \IfNoValueF{#1}{
                \node [style=label] (4) at (1, 0) {$#1$};
                \node [style=label] (5) at (-1, 0.5) {$#1$};
                \node [style=label] (10) at (-1, -0.5) {$#1$};
            }
            \node [style=none] (6) at (0.75, 0) {};
            \node [style=black] (7) at (0, 0) {};
            \node [style=none] (8) at (-0.65, 0.5) {};
            \node [style=none] (9) at (-0.65, -0.5) {};
            \node [style=none] (11) at (-0.75, 0.5) {};
            \node [style=none] (12) at (-0.75, -0.5) {};
        \end{pgfonlayer}
        \begin{pgfonlayer}{edgelayer}
            \draw (6.center) to (7);
            \draw [bend left] (8.center) to (7);
            \draw [bend left] (7) to (9.center);
            \draw (11.center) to (8.center);
            \draw (12.center) to (9.center);
        \end{pgfonlayer}
    \end{tikzpicture}              
}
\NewDocumentCommand{\codiscardCirc}{o}{
    \begin{tikzpicture}
        \begin{pgfonlayer}{nodelayer}
            \IfNoValueF{#1}{
                \node [style=label] (4) at (0, 0) {$#1$};
            }
            \node [style=none] (6) at (-0.25, 0) {};
            \node [style=black] (7) at (-1, 0) {};
        \end{pgfonlayer}
        \begin{pgfonlayer}{edgelayer}
            \draw (6.center) to (7);
        \end{pgfonlayer}
    \end{tikzpicture}    
}
\NewDocumentCommand{\symmCirc}{oo}{
    \begin{tikzpicture}
        \begin{pgfonlayer}{nodelayer}
            \IfNoValueF{#2}{
                \node [style=label] (5) at (-0.85, 0.5) {$#1$};
                \node [style=label] (10) at (-0.85, -0.5) {$#2$};
                \node [style=label] (13) at (0.85, 0.5) {$#2$};
                \node [style=label] (14) at (0.85, -0.5) {$#1$};
            }
            \node [style=none] (8) at (-0.5, 0.5) {};
            \node [style=none] (9) at (-0.5, -0.5) {};
            \node [style=none] (11) at (-0.6, 0.5) {};
            \node [style=none] (12) at (-0.6, -0.5) {};
            \node [style=none] (15) at (0.5, 0.5) {};
            \node [style=none] (16) at (0.5, -0.5) {};
            \node [style=none] (17) at (0.6, 0.5) {};
            \node [style=none] (18) at (0.6, -0.5) {};
        \end{pgfonlayer}
        \begin{pgfonlayer}{edgelayer}
            \draw (11.center) to (8.center);
            \draw (12.center) to (9.center);
            \draw (17.center) to (15.center);
            \draw (18.center) to (16.center);
            \draw [in=-180, out=0, looseness=0.75] (8.center) to (16.center);
            \draw [in=0, out=180, looseness=0.75] (15.center) to (9.center);
        \end{pgfonlayer}
    \end{tikzpicture}    
}
\NewDocumentCommand{\seqCirc}{mmoo}{
    \begin{tikzpicture}
        \begin{pgfonlayer}{nodelayer}
            \node [style=box] (3) at (-0.5, 0) {$#1$};
            \node [style=box] (8) at (0.75, 0) {$#2$};
            \node [style=none] (6) at (-1.5, 0) {};
            \IfNoValueF{#4}{
                \node [style=label] (4) at (-1.75, 0) {$#3$};
                \node [style=label] (5) at (2, 0) {$#4$};
            }
            \node [style=none] (9) at (1.75, 0) {};
        \end{pgfonlayer}
        \begin{pgfonlayer}{edgelayer}
            \draw (6.center) to (3);
            \draw (3) to (8);
            \draw (8) to (9.center);
        \end{pgfonlayer}
    \end{tikzpicture}    
}
\NewDocumentCommand{\parCirc}{mmoooo}{
    \begin{tikzpicture}
        \begin{pgfonlayer}{nodelayer}
            \node [style=box] (3) at (0, 0.5) {$#1$};
            \node [style=box] (8) at (0, -0.5) {$#2$};
            \node [style=none] (6) at (-1, 0.5) {};
            \node [style=none] (7) at (1, 0.5) {};
            \IfNoValueF{#4}{
                \node [style=label] (4) at (-1.25, 0.5) {$#3$};
                \node [style=label] (5) at (1.25, 0.5) {$#4$};
            }
            \node [style=none] (9) at (-1, -0.5) {};
            \node [style=none] (10) at (1, -0.5) {};
            \IfNoValueF{#6}{
                \node [style=label] (11) at (-1.25, -0.5) {$#5$};
                \node [style=label] (12) at (1.25, -0.5) {$#6$};
            }
        \end{pgfonlayer}
        \begin{pgfonlayer}{edgelayer}
            \draw (6.center) to (3);
            \draw (7.center) to (3);
            \draw (9.center) to (8);
            \draw (10.center) to (8);
        \end{pgfonlayer}
    \end{tikzpicture}    
}
\NewDocumentCommand{\emptyCirc}{}{
    \begin{tikzpicture}
        \begin{pgfonlayer}{nodelayer}
            \node [style=none] (0) at (-0.5, 0.5) {};
            \node [style=none] (1) at (0.5, 0.5) {};
            \node [style=none] (2) at (0.5, -0.5) {};
            \node [style=none] (3) at (-0.5, -0.5) {};
        \end{pgfonlayer}
        \begin{pgfonlayer}{edgelayer}
            \draw [densely dashed] (3.center)
                 to (0.center)
                 to (1.center)
                 to (2.center)
                 to cycle;
        \end{pgfonlayer}
    \end{tikzpicture}          
}
\NewDocumentCommand{\stateCirc}{mo}{
    \begin{tikzpicture}
        \begin{pgfonlayer}{nodelayer}
            \node [style=box] (3) at (0, 0) {$#1$};
            \node [style=none] (7) at (1, 0) {};
            \IfNoValueF{#2}{
                \node [style=label] (5) at (1.25, 0) {$#2$};
            }
        \end{pgfonlayer}
        \begin{pgfonlayer}{edgelayer}
            \draw (7.center) to (3);
        \end{pgfonlayer}
    \end{tikzpicture}    
}
\NewDocumentCommand{\effectCirc}{mo}{
    \begin{tikzpicture}
        \begin{pgfonlayer}{nodelayer}
            \node [style=box] (3) at (1.25, 0) {$#1$};
            \node [style=none] (7) at (0.25, 0) {};
            \IfNoValueF{#2}{
                \node [style=label] (5) at (0, 0) {$#2$};
            }
        \end{pgfonlayer}
        \begin{pgfonlayer}{edgelayer}
            \draw (7.center) to (3);
        \end{pgfonlayer}
    \end{tikzpicture}    
}
\NewDocumentCommand{\nbox}{moooo}{
\begin{tikzpicture}
	\begin{pgfonlayer}{nodelayer}
		\node [style=box] (0) at (0, 0) {$#1$};
		\node [style=label] (1) at (1.25, 0.25) {$#4$};
		\node [style=label] (2) at (1.25, -0.5) {$#5$};
		\node [style=label] (3) at (-1.25, 0.25) {$#2$};
		\node [style=label] (4) at (-1.25, -0.5) {$#3$};
		\node [style=label] (6) at (-0.75, 0.0) {$\vdots$};
		\node [style=label] (7) at (0.75, 0.0) {$\vdots$};
	\end{pgfonlayer}
	\begin{pgfonlayer}{edgelayer}
		\draw [style=wire, bend left=345, looseness=1.25] (1) to (0);
		\draw [style=wire, bend left=15] (2) to (0);
		\draw [style=wire, bend left=15, looseness=1.25] (3) to (0);
		\draw [style=wire, bend right=15, looseness=1.25] (4) to (0);
	\end{pgfonlayer}
\end{tikzpicture}
}
\DeclareMathOperator*{\argmin}{arg\,min}
\renewcommand{\proofname}{\textit{Proof.}} 
\newcommand{\id}{\mathrm{id}}
\newcommand{\cost}{\mathrm{cost}}
\newcommand{\sem}[1]{[\![#1]\!]_{\mathcal{I}}}
\newcommand{\com}[1]{\langle\!\langle#1\rangle\!\rangle_{\mathbb{O}}}
\newcommand{\mb}{\mathbb }
\newcommand{\mc}{\mathcal }
\newcommand{\B}{\mathbf }
\newcommand{\N}{{\mathbb{N}}}
\newcommand{\R}{{\mathbb{R}}}
\newcommand{\vertiii}[1]{{\left\vert\kern-0.25ex\left\vert\kern-0.25ex\left\vert #1
\right\vert\kern-0.25ex\right\vert\kern-0.25ex\right\vert}}
\newcommand{\defeq}{:=}
\newcommand{\Cat}[1]{{ \bf #1}}
\begin{document}
%
\title{Mathematical Foundation of \\ Interpretable Equivariant  Surrogate Models}
%
%


\author{Jacopo Joy Colombini\inst{1} \and Filippo Bonchi\inst{2} \and
Francesco Giannini\inst{1}
\and
Fosca Giannotti\inst{1}
\and
Roberto Pellungrini\inst{1}
\and
Patrizio Frosini\inst{2}
}
\authorrunning{Colombini et al.}
%
\institute{University of Pisa, Pisa, Italy\\ \email{\{
name.surname
\}@unipi.it} \and
Scuola Normale Superiore, Pisa, Italy\\
\email{\{
name.surname
\}@sns.it}}

\maketitle              
\begin{abstract}
This paper introduces a rigorous mathematical framework for neural network explainability, and more broadly for the explainability of equivariant operators called Group Equivariant Operators (GEOs)
based on Group Equivariant Non-Expansive Operators (GENEOs) transformations. The central concept involves quantifying the distance between GEOs by measuring the non-commutativity of specific diagrams. Additionally, the paper proposes a definition of interpretability of GEOs according to a complexity measure that can be defined according to each user preferences. Moreover, we explore the formal properties of this framework and show how it can be applied in classical machine learning scenarios, like image classification with convolutional neural networks.
\keywords{Mathematical Foundation of XAI \and XAI metrics \and 
Equivariant Neural Networks}
\end{abstract}

\section{Introduction}
\label{sec:intro}

What is an ``\textit{explanation}''? 
An explanation can be seen as a combination of elementary blocks, much like a sentence is formed by words, a formula by symbols, or a proof by axioms and lemmas. The key question is when such a combination effectively explains a phenomenon. Notably, the quality of an explanation is observer-dependent—what is clear to a scientist may be incomprehensible to a philosopher or a child. In our approach, an explanation of a phenomenon $P$ is convenient for an observer $\mathbb{O}$ if (i) $\mathbb{O}$ finds it comfortable, meaning the building blocks are easy to manipulate, and (ii) it is convincing, meaning $\mathbb{O}$ perceives $P$ and the explanation as sufficiently close.
We contextualize this perspective by assuming that the phenomenon is an AI agent, viewed as an operator, thus saying that the action of an agent $\mathbb{A}$ is explained by another agent $\mathbb{B}$ from the perspective of an observer $\mathbb{O}$ if:
\begin{enumerate} 
\item $\mathbb{O}$ perceives $\mathbb{B}$ as close to $\mathbb{A}$; 
\item $\mathbb{O}$ perceives $\mathbb{B}$ as less complex than $\mathbb{A}$. 
\end{enumerate} 
This is represented in \Cref{fig:sketch} where we show this concept with an example.
Notwithstanding the fact that agent $\mathbb{O}$ has the right to choose subjective criteria to measure how good $\mathbb{B}$ is to approximate $\mathbb{A}$ and their complexities, this paper introduces a mathematical framework for these measurements.
\begin{figure}[t]
    \centering
\begin{subfigure}[t]{0.45\linewidth}
    \includegraphics[width=\linewidth]{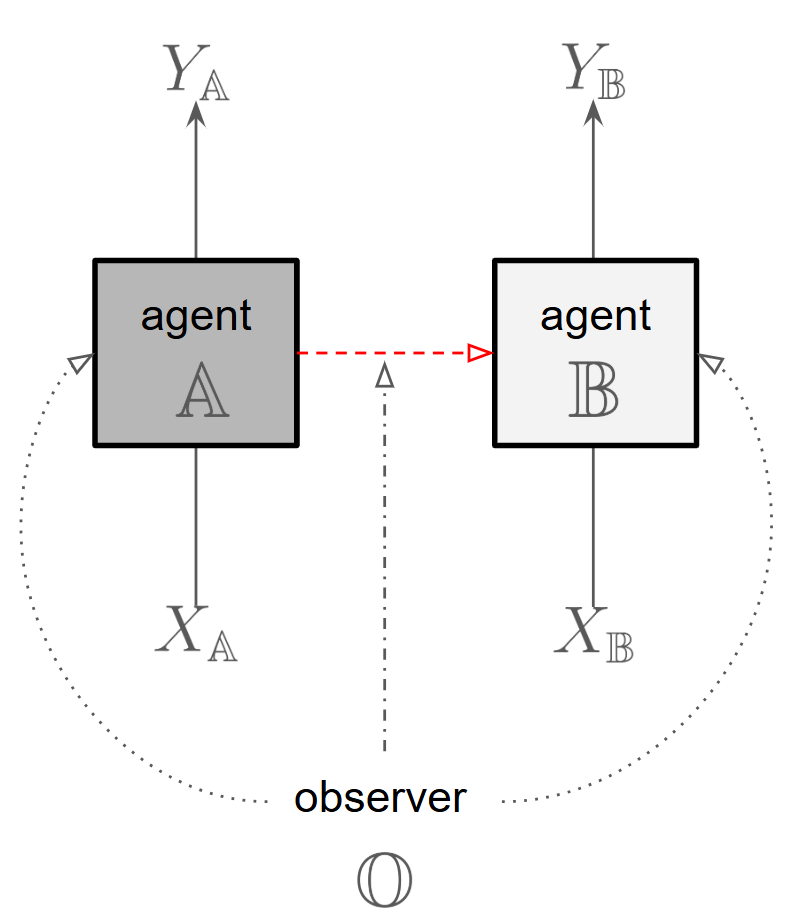}
    \caption{An agent $\mathbb{B}$ explains an agent $\mathbb{A}$ for an observer $\mathbb{O}$: $\mathbb{O}$ perceives $\mathbb{B}$ as close to $\mathbb{A}$, and $\mathbb{B}$ as less opaque than $\mathbb{A}$}
    \end{subfigure} \hfill
    \begin{subfigure}[t]{0.45\linewidth}
    \includegraphics[width=\linewidth]{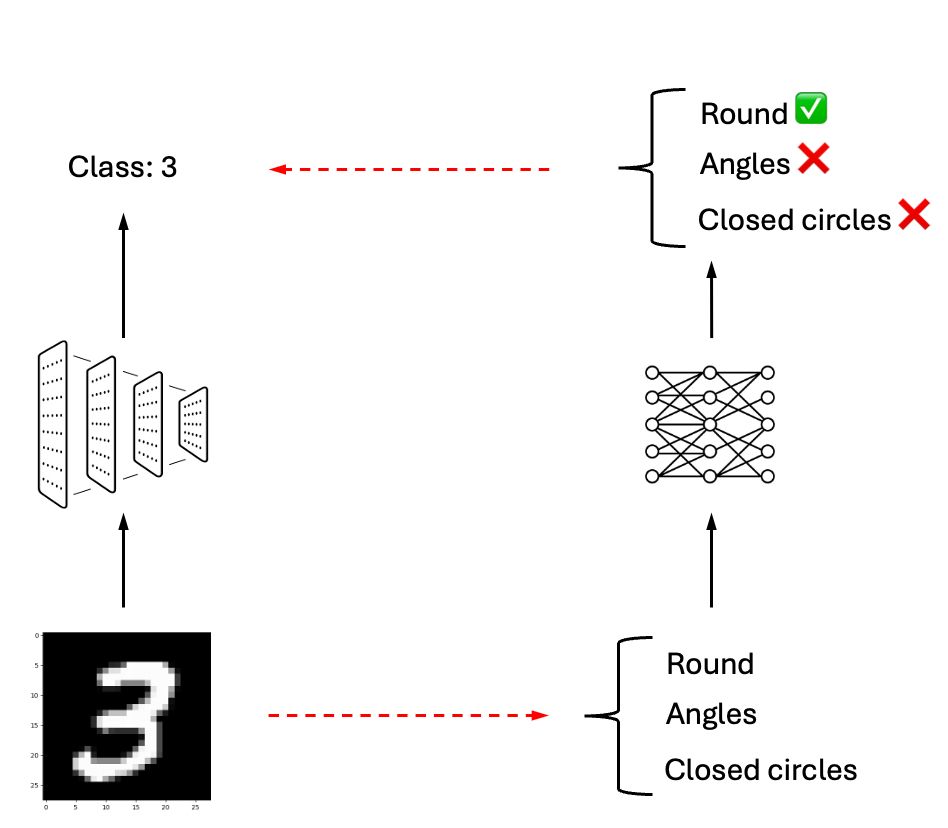}
    \caption{A practical example: an image classification task can be surrogated by a model based on textual concepts}
    \end{subfigure}
    \caption{Representation of interpretable surrogate models exemplified on MNIST.}
    \label{fig:sketch}
\end{figure}

The growing use of complex neural networks in critical applications demands both high performance and transparency in decision-making. While AI interpretability and explainability have advanced, a rigorous mathematical framework for defining and comparing explanations is still lacking \cite{palacio2021xai}. Recent efforts to formalize explanations \cite{giannini2024categorical} and interpretable models \cite{marconato2023interpretability} do not provide practical guidelines for designing or training explainable models, nor do they incorporate the notion of an observer within the theory. 
Moreover, researchers emphasize the importance of Group Equivariant Operators (GEOs) in machine learning \cite{AnRoPo16,cohen2016group,worrall2017harmonic,GeArCaal23}, as they integrate prior knowledge and enhance neural network design control \cite{Bengio2013}. While standard neural networks are universal approximators \cite{cybenko1989approximation}, this typically increases complexity. However, no existing XAI techniques address explaining an equivariant model using another equivariant model.
This paper addresses this gap by introducing a framework for learning interpretable surrogate models of a black-box and defining a measure of interpretability based on an observer’s subjective preferences.
Given the importance of equivariant operators, our XAI framework is built on the theory of GEOs and Group Equivariant Non-Expansive Operators (GENEOs) \cite{FrJa16}. GEOs, a broader class than standard neural networks, are well-suited for processing data with inherent symmetries. Indeed, equivariant networks, such as convolutional \cite{gerken2023geometric} and graph neural networks \cite{satorras2021n}, have proven effective across different tasks \cite{ruhe2023clifford}.
Using GENEO-based transformations, we develop a theory for learning surrogate models of a given GEO by minimizing algebraic diagram commutation errors. The learned surrogate model can either perform a task or approximate a black-box model’s predictions while optimizing interpretability based on an observer’s perception of complexity, while allowing different observers to have distinct interpretability preferences for the same model architecture.

\textbf{Contributions.}  Our contributions can be summarized as follows.

\begin{itemize} \item Introduction of a mathematical framework to define interpretable surrogate models, where interpretability depends on a specific observer.
\item Definition of a distance between GEOs using diagram non-commutativity, providing a quantitative method for model comparison and training.
\item Formal definition of GEOs' complexity to assess model interpretability.
\item We show empirically that these metrics enable training of more interpretable models, usable for direct task-solving or as surrogates for black-box models.
\end{itemize}

The paper is organized as follows. \Cref{sec:back} recalls basics from different Mathematics areas that we use to define our metrics in \Cref{sec:diagrams}.
\Cref{sec:framework} shows how these metrics are used in practice to define a learning problem for an interpretable surrogate model. We show how the proposed framework can be used via an experimental evaluation in \Cref{sec:exp}. Finally, \Cref{sec:relwork} comments on related work and \Cref{sec:conc} draws conclusions and remarks on future work. The Appendix contains additional material and all proofs. 

\section{Mathematical Preliminaries}
\label{sec:back}


The framework proposed in this paper is founded on mathematical structures studied in various fields, such as geometry and category theory. Metric spaces and groups are used to define GE(NE)Os, while categories to compose them.


\subsection{Perception Spaces and GE(NE)Os}
\label{subsec:geneo}

Recall that a \emph{pseudo-metric space} is a pair $(X,d)$ where $X$ is a set and $d\colon X\times X \to[0,\infty]$ is a pseudo-metric, namely a function such that, for all $x,y,z \in X$,
\begin{equation*}
(R)\, d(x,x) = 0\text{,} \qquad 
(S)\, d(x,y) = d(y,x)\text{,} \qquad
(T)\, d(x,z) \leq d(x,y) + d(y,z)\text{.} 
\end{equation*}
A \emph{metric} $d$ is a pseudo-metric that additionally satisfies $d(x,y)=0 \implies x=y$. 
$d\colon X\times X \to[0,\infty]$ is a \textit{hemi-metric} if it only satisfies (R) and (T). We use the informal term distance to refer to either metrics, pseudo-metrics or hemi-metrics.
%

A \emph{group} $\mathbf{G}=(G,\circ, \id_G)$ consists of a set $G$, an associative operation $\circ\colon G\times G \to G$ having a unit element $\id_G \in G$ such that, for all $g\in G$, there exists $g^{-1}\in G$ satisfying $g\circ g^{-1}=g^{-1}\circ g = \id_G$. A \emph{group homomorphism} $T\colon (G,\circ_G, \id_G) \to (K,\circ_K,\id_K)$ is a function $T\colon G\to K$ such that, for all $g_1,g_2\in G$, $T(g_1\circ_G g_2)=T(g_1)\circ_K T(g_2)$.
Given a group $(G,\circ, \id_G)$ and a set $X$, a \emph{group left action} is a function $*\colon G \times X \to X$ such that, for all $x\in X$ and $g_1,g_2\in G$, 
\begin{equation*}
\id_G * x=x \qquad \text{ and } \qquad
(g_1\circ g_2) * x=g_1*(g_2*x)\text{.}
\end{equation*}


\smallskip

With these ingredients, we can now illustrate the notions of perception space GEO and GENEO. We refer the interested reader to \cite{FrJa16,BeFrGiQu19}
and \cite{newmet2018,genperm2023,Boeal24} for a more extensive description of GENEOs and their applications.

\begin{definition}
\label{defpp}
An (extended) \textit{perception space} $(X,d_X,\B G, *)$, shortly $(X,\B G)$, consists of a pseudo-metric space $(X, d_X)$, a group $\B G$,
a left group action $*\colon G\times X \to X$  
such that, for all $x_1,x_2\in X$ and every $g\in G$,
\begin{equation*}d_X(g*x_1,g*x_2)=d_X(x_1,x_2)\text{.}
\end{equation*}
\end{definition}
\begin{example}
   $(X,\B G)$, where $\B G$ is the group containing the rotation of $0^\circ, 90^\circ, 180^\circ, 270^\circ$, and $X$ is a set of images closed under the actions of $\B G$, is a perception space.
\end{example}
Notice that in any perception space, one can define a pseudo-metric over the group $\B G$ by fixing $d_G(g_1,g_2):=\sup_{x\in X}d_X(g_1*x,g_2*x)$ for any $g_1,g_2\in G$. With this definition, one can easily show that $\B G$ is a topological group and that the action $*$ is continuous (see \Cref{prop:topo} in \Cref{app:proofs}).



\begin{definition}
Let $(X,G)$, $(Y,K)$ be two (extended) perception spaces, $f\colon X\to Y$ and $t\colon G\to K$ a group homomorphism. We say that 
$(f,t)$ is an (extended) \emph{group equivariant operator} (GEO)
if $g(g*x)=t(g)*f(x)$ for every $x\in X$, $g\in G$. 
$(f,t)$ is said an (extended) \emph{group equivariant non-expansive operator} (GENEO) in case it is a GEO and it is also non-expansive, i.e.,
\begin{enumerate}
    \item $d_Y(f(x_1),f(x_2))\le d_X(x_1,x_2)$ for every $x_1,x_2\in X$,
    \item $d_K(t(g_1),t(g_2))\le d_G(g_1,g_2)$ for every $g_1,g_2\in G$.
\end{enumerate}
\end{definition}
%
%
The previous extended definitions generalize original perception pairs, GEOs, and GENEOs beyond data represented as functions. We simply refer to them as perception space, GEO, and GENEO. With slight abuse of notation, we use $d_{\mathrm{dt}}$ for the metric  $d_X$ on the set of data, and $d_{\mathrm{gr}}$ for the metric $d_G$ on the group $G$, relying on context to specify the perception space $(X,G)$ under consideration.

\begin{example}[Neural Networks as GEOs]
Neural networks are a special case of GEOs, with different architectures equivariant to specific groups. Convolutional Neural Networks (CNNs) are equivariant to translations, while Graph Neural Networks (GNNs) respect graph permutations. Although standard Multi-Layer Perceptrons are not typically equivariant, they can be viewed as GEOs on the trivial group $\B 1$, containing only the neutral element.
\end{example}

\begin{example}
Let $X_\alpha$ be the set of all subsets $\R^3$ and the group $\B G_\alpha$ the group of all translations in $\R^3$, and let $\tau_{(x,y,z)}$ represent the translations by $(x,y,z)$. Similarly define $X_\beta$ and $\B G_\beta$ in $\R^2$ with $\tau_{(x,y)}$ translating by $(x,y)$. A GENEO $(f,t)$ can be defined where $f(x)$ gives the shadow (orthogonal projection) of $x$ in $X_\beta$ and the homomorphism $t\colon \B G_\alpha\to \B G_\beta$ is given by $t(\tau_{(x,y,z)})=\tau_{(x,y)}$ for projections onto the $xy$-plane. Similarly, defining $t(\tau_{(x,y,z)})=\tau_{(y,z)}$ gives a GENEO for projections onto the  $yz$-plane.
\end{example}


\subsection{A Categorical Algebra of GEOs}
\label{subsec:cat}



We introduce a simple language to specify combinations of GEOs. 
Our proposal rely on the algebra of monoidal categories (CD-categories~\cite{ChoJ19}) that enjoy an intuitive --but formal-- graphical representation by means of string diagrams~\cite{selinger2010survey}.

\paragraph{Syntax.} 
We fix a set $\mathcal{S}$ of basic sorts and we consider the set $\mathcal{S^*}$ of words over $\mathcal{S}$: we write $1$ for the empty word and $U\otimes V$, or just $UV$, for the concatenation of any two words $U,V\in \mathcal{S^*}$. Moreover, we fix a set $\Gamma$ of operator symbols and two functions  $ar, coar \colon \Gamma \to \mathcal{S^*}$. For an operator symbol $g\in \Gamma$, $ar(g)$ represents its arity, intuitively the types of its input and $coar(g)$ its coarity, intuitively its output. The tuple $(\mathcal{S}, \Gamma, ar, coar)$, shortly $\Gamma$, is what is called in categorical jargon a \emph{monoidal signature}. 

We consider terms generated by the following context-free grammar
\[\arraycolsep=1.4pt
\begin{array}{rccccccccccccc}
c &::= & \nbox{g}[A_1][A_n][B_1][B_m] & \mid & \emptyCirc{} & \mid & \idCirc[A] & \mid  & \symmCirc[A][B] & \mid & \copyCirc[A] & \mid & \discardCirc[A] & \mid  \\ 
& & c_1 \circ c_2 & \mid & c_1 \otimes c_2
\end{array}\]
where $A,B, A_i,B_i$ are sorts in $\mathcal{S}$ and $g$ is a symbol in $\Gamma$ with arity $A_1\otimes  \dots \otimes A_n$ and coarity $B_1\otimes  \dots \otimes B_m$. Terms of our grammar can be thought of as circuits where information flows from left to right: the wires on the left represent the input ports, those on the right the outputs; the labels on the  wires specify the types of the ports. The input type of a term is the word in $\mathcal{S}^*$ obtained by reading from top to bottom the labels on the input ports; Similarly for the outpus. The circuit $\nbox{g}[A_1][A_n][B_1][B_m]$ takes $n$ inputs of type $A_1, \dots, A_n$ and produce $m$ outputs of type $B_1, \dots, B_m$; $\emptyCirc{}$ is the empty circuits with no inputs and no output; $\idCirc[A]$ is the wires where information of type $A$ flows from left to right; $\symmCirc[A][B]$ allows for crossing of wires; $\copyCirc[A]$ receives some information of type $A$ and emit two copies as outputs; $\discardCirc[A]$ receives an information of type $A$ and discard it. For arbitrary circuits $c_1$ and $c_2$, $c_1 \circ c_2$ and  $c_1 \otimes c_2$ represent, respectively their sequential and parallel composition drawn as
\begin{center}
\begin{tabular}{ccc}
\begin{tikzpicture}
	\begin{pgfonlayer}{nodelayer}
		\node [style=box] (0) at (2.5, 0) {$c_1$};
		\node [style=label] (1) at (3.75, 0.25) {$C_1$};
		\node [style=label] (2) at (3.75, -0.5) {$C_o$};
		\node [style=label] (6) at (1.75, 0) {$\vdots$};
		\node [style=label] (7) at (3.25, 0) {$\vdots$};
		\node [style=box] (8) at (0, 0) {$c_2$};
		\node [style=label] (11) at (-1.25, 0.25) {$A_1$};
		\node [style=label] (12) at (-1.25, -0.5) {$A_n$};
		\node [style=label] (13) at (-0.75, 0) {$\vdots$};
		\node [style=label] (14) at (0.75, 0) {$\vdots$};
		\node [style=label] (15) at (1.25, 0.75) {};
		\node [style=label] (16) at (1.25, 0.75) {$B_1$};
		\node [style=label] (17) at (1.25, -0.25) {};
		\node [style=label] (18) at (1.25, -0.25) {$B_m$};
	\end{pgfonlayer}
	\begin{pgfonlayer}{edgelayer}
		\draw [style=wire, bend left=345, looseness=1.25] (1) to (0);
		\draw [style=wire, bend left=15] (2) to (0);
		\draw [style=wire, bend left=15, looseness=1.25] (11) to (8);
		\draw [style=wire, bend right=15, looseness=1.25] (12) to (8);
		\draw [style=wire, bend left] (8) to (0);
		\draw [style=wire, bend right, looseness=1.25] (8) to (0);
	\end{pgfonlayer}
\end{tikzpicture}
\qquad
&
\qquad
 \text{ and } 
&
\qquad
\begin{tikzpicture}
	\begin{pgfonlayer}{nodelayer}
		\node [style=box] (0) at (0, 0.75) {$c_1$};
		\node [style=label] (1) at (1.25, 1) {$B_1$};
		\node [style=label] (2) at (1.25, 0.25) {$B_m$};
		\node [style=label] (3) at (-1.25, 1) {$A_1$};
		\node [style=label] (4) at (-1.25, 0.25) {$A_n$};
		\node [style=label] (6) at (-0.75, 0.75) {$\vdots$};
		\node [style=label] (7) at (0.75, 0.75) {$\vdots$};
		\node [style=box] (8) at (0, -0.5) {$c_2$};
		\node [style=label] (9) at (1.25, -0.25) {$D_1$};
		\node [style=label] (10) at (1.25, -1) {$D_k$};
		\node [style=label] (11) at (-1.25, -0.25) {$C_1$};
		\node [style=label] (12) at (-1.25, -1) {$C_j$};
		\node [style=label] (13) at (-0.75, -0.5) {$\vdots$};
		\node [style=label] (14) at (0.75, -0.5) {$\vdots$};
	\end{pgfonlayer}
	\begin{pgfonlayer}{edgelayer}
		\draw [style=wire, bend left=345, looseness=1.25] (1) to (0);
		\draw [style=wire, bend left=15] (2) to (0);
		\draw [style=wire, bend left=15, looseness=1.25] (3) to (0);
		\draw [style=wire, bend right=15, looseness=1.25] (4) to (0);
		\draw [style=wire, bend left=345, looseness=1.25] (9) to (8);
		\draw [style=wire, bend left=15] (10) to (8);
		\draw [style=wire, bend left=15, looseness=1.25] (11) to (8);
		\draw [style=wire, bend right=15, looseness=1.25] (12) to (8);
	\end{pgfonlayer}
\end{tikzpicture}\text{ .}
\end{tabular}
\end{center}
As expected, the sequential composition of $c_1$ and $c_2$ is possible only when the outputs of $c_2$ coincides with the inputs of $c_1$.

\begin{remark}
The reader may have noticed that different syntactic terms are rendered equal by the diagrammatic representation. For instance both $c_1 \circ (c_2 \circ c_3)$ and $(c_1 \circ c_2) \circ c_3$ are drawn as
\begin{center}\begin{tikzpicture}
	\begin{pgfonlayer}{nodelayer}
		\node [style=box] (0) at (2.5, 0) {$c_1$};
		\node [style=label] (6) at (1.75, 0) {$\vdots$};
		\node [style=box] (8) at (0, 0) {$c_3$};
		\node [style=label] (11) at (-1.25, 0.25) {$A_1$};
		\node [style=label] (12) at (-1.25, -0.5) {$A_n$};
		\node [style=label] (13) at (-0.75, 0) {$\vdots$};
		\node [style=label] (14) at (0.75, 0) {$\vdots$};
		\node [style=label] (15) at (1.25, 0.75) {};
		\node [style=label] (16) at (1.25, 0.75) {$B_1$};
		\node [style=label] (17) at (1.25, -0.25) {};
		\node [style=label] (18) at (1.25, -0.25) {$B_m$};
		\node [style=box] (19) at (5, 0) {$c_1$};
		\node [style=label] (20) at (6.25, 0.25) {$D_1$};
		\node [style=label] (21) at (6.25, -0.5) {$D_p$};
		\node [style=label] (22) at (4.25, 0) {$\vdots$};
		\node [style=label] (23) at (5.75, 0) {$\vdots$};
		\node [style=box] (24) at (2.5, 0) {$c_2$};
		\node [style=label] (25) at (3.25, 0) {$\vdots$};
		\node [style=label] (26) at (3.75, 0.75) {};
		\node [style=label] (27) at (3.75, 0.75) {$C_1$};
		\node [style=label] (28) at (3.75, -0.25) {};
		\node [style=label] (29) at (3.75, -0.25) {$C_o$};
	\end{pgfonlayer}
	\begin{pgfonlayer}{edgelayer}
		\draw [style=wire, bend left=15, looseness=1.25] (11) to (8);
		\draw [style=wire, bend right=15, looseness=1.25] (12) to (8);
		\draw [style=wire, bend left] (8) to (0);
		\draw [style=wire, bend right, looseness=1.25] (8) to (0);
		\draw [style=wire, bend left=345, looseness=1.25] (20) to (19);
		\draw [style=wire, bend left=15] (21) to (19);
		\draw [style=wire, bend left] (24) to (19);
		\draw [style=wire, bend right, looseness=1.25] (24) to (19);
	\end{pgfonlayer}
\end{tikzpicture}
\end{center}
This is not an issue since the two terms represent the same GEO via the semantics that we illustrate here below, after a minimal background on categories.
\end{remark}

\paragraph{Categories.} Diagrams are arrows of the (strict) CD category freely generated by the monoidal signature $\Gamma$. The reader who is \emph{not} an expert in category theory may safely ignore this fact and only know that a \emph{category} $\Cat{C}$ consists of (1) a collection of objects denoted by $Ob({\bf C})$; (2) for all objects $A,B\in Ob(\Cat{C})$, a collection of arrows $f\colon A \to B$ with source object $A$ and target object $B$; (3) for all objects $A$, an identity arrow $id_{A}\colon A \to A$ and (4) for all arrows $f\colon A \to B$ and $g\colon B \to C$,  a composite arrow $g\circ f \colon A \to C$ satisfying
\[f\circ (g\circ h)=(f\circ g)\circ h \qquad f\circ id_{A} = f=id_{B}\circ f\]
for all $f\colon A \to B$, $g\colon B\to C$ and $h\colon D\to E$. 

Three categories will be particularly relevant for our work: the category
$\Cat{Diag}_{\Gamma}$ having words in $\mathcal{S}^*$ as objects and diagrams as arrows, the category
$\Cat{GEO}$ having perception spaces as objects and GEOs as arrows and the category $\Cat{GENEO}$ having perception spaces as objects and GENEOs as arrows. 

\paragraph{Semantics.} As mentioned at the beginning of this section, our diagrammatic language allows one to express combinations of GEOs. Intuitively, the symbols in $\Gamma$ are basic \emph{building blocks} that can be composed in sequence and in parallel with the aid of some wiring technology. The building blocks have to be thought of as atomic GEOs, while diagrams as composite ones.

To formally provide semantics to diagrams in terms of GEOs, the key ingredient is an \emph{interpretation} $\mathcal{I}$ of the monoidal signature $\Gamma$ within the (monoidal) category $\Cat{GEO}$, shortly, a function assigning to each symbol $g\in \Gamma$ a corresponding GEO. Then, by means of a universal property (or, depending on one's perspective, abstract mumbo jumbo), one obtains a function (actually a functor) $\sem{-}\colon {\bf Diag}_\Gamma \to {\bf GEO}$ assigning to each diagram the denoted GEOs (see Table~\ref{tab:semantics} in the Appendix for a simple inductive definition).

Note that $\sem{-}$ may not be surjective, in the sense that not all GEOs
are denoted by some diagrams: we call $\mathcal{G}^\Gamma_{\mathcal{I}}$ the image of ${\bf Diag}_\Gamma$ through $\sem{-}$, i.e., \[\mathcal{G}^\Gamma_{\mathcal{I}}:=\{(f,t) \;|\; \exists c\in \Cat{Diag}_\Gamma \text{ s.t. } \sem{c}=(f,t)\}\text{.}\] 
Hereafter, we fix a monoidal signature $\Gamma$ and an interpretation $\mathcal{I}$ and we write $\mathcal{G}^\Gamma_{\mathcal{I}}$ simply as $\mathcal{G}$. This represents the universe of GEOs that are interesting for the observer, which we are going to introduce in the next section.

\section{Observers-based Approximation and Complexity}
\label{sec:diagrams}


This paper aims at developing an applicable mathematical theory of interpretable models, which is based on the following intuition: an agent $\mathbb{A}$ can be interpreted via another agent $\mathbb{B}$ from the perspective of an observer $\mathbb{O}$ if: i) $\mathbb{O}$ perceives $\mathbb{B}$ as similar to $\mathbb{A}$ and ii) $\mathbb{O}$ perceives $\mathbb{B}$ as less complex than $\mathbb{A}$. 
This perspective motivates us to build a framework allowing the modeling of distance measures for GEOs (\Cref{subsec:dist}) and their degree of complexity (opaqueness/not interpretability, \Cref{subsec:complexity}), w.r.t. the specification of a certain observer. 

\begin{definition}
    An \emph{observer} $\mathbb{O}$ interested in $\mathcal{G}$ is a couple $(\Cat{T},\mc{C})$ where:
    \begin{itemize}
   \item $\Cat{T}$ is a category of \emph{translations GENEOs}, namely a category having as objects $Ob(\Cat{T})$ those perception spaces that are sources and targets of GEOs in $\mathcal{G}$ and as arrows $Hom(\Cat{T})$ a selected set of GENEOs. 
        \item $\mc{C}$ is a \emph{complexity assignment}, namely a function $\mc{C} \colon \Gamma \to \mathbb{R}^+$.
    \end{itemize}
\end{definition}

The \emph{translation GENEOs} in $\mathbf{T}$ describe all the possible ways that the observer can ``translate'' data belonging to one perception space into data belonging to another perception space. Requiring these to be GENEOs, i.e., non-expansive, ensures that such translations performed by the observer cannot enlarge distances between data.
For example, the observer may admit only isometries as morphisms in $\mathbf{T}$, or the observer may not admit any translation at all, meaning that $\mathbf{T}$ only contains identities (note that this is the smallest possible $\Cat{T}$).

The \emph{complexity assignment} $\mc{C} \colon \Gamma \to \mathbb{R}^+$ maps any building block $g$ from $\Gamma$ into a positive real number, a quantity that represent how \emph{complex} is perceived $g$ by the observer. Here complexity does not refer to the usual computational complexity but rather to the degree of \emph{stress} that the observer perceives in dealing with $g$. Note that such assignment is completely arbitrary and thus, different observers may assign different complexities to the same building block. Any observer can specify what are the types of functions that he deems interpretable and/or more informative, from their perspective, for a given problem.

\subsection{Surrogate Distance of GEOs}
\label{subsec:dist}
To formalize the notion of a surrogate model for an observer $\mb{O}$, we introduce a new hemi-metric $h_{\mb{O}}$, which we call the \emph{surrogate distance} of a GEO for another GEO. To proceed, it is fundamental the notion of \textit{crossed translation pair}.

\begin{definition}
Let  $(f_\alpha,t_\alpha)\colon (X_\alpha,G_\alpha)\to (Y_\alpha,K_\alpha)$ and $(f_\beta,t_\beta)\colon(X_\beta,G_\beta)\to (Y_\beta,K_\beta)$ be two GEOs in $\mathcal{G}$. 
A \emph{crossed pair of translation} $\pi$ from $(f_\alpha,t_\alpha)$ to $(f_\beta,t_\beta)$, written $\pi\colon (f_\alpha,t_\alpha) \leftrightharpoons_{\mathbf{T}} (f_\beta,t_\beta)$, is a couple $\Big((l_{\alpha,\beta},p_{\alpha,\beta}),(m_{\beta,\alpha},q_{\beta,\alpha})\Big)$ where
\begin{itemize}
    \item $(l_{\alpha,\beta},p_{\alpha,\beta})\colon (X_\alpha,G_\alpha) \to (X_\beta,G_\beta)$ is a GENEO in $\Cat{T}$,
    \item $(m_{\beta,\alpha},q_{\beta,\alpha}) \colon (Y_\beta,K_\beta) \to (Y_\alpha,K_\alpha)$ is a GENEO in $\Cat{T}$.
\end{itemize}
\end{definition}
Figure \ref{fig:diagram} provides an intuitive visualization of a crossed pair of translation GENEOs.
Note that the two GENEOs have opposite directions.

\begin{figure}[t]
\begin{center}
\includegraphics[width=0.6\textwidth]{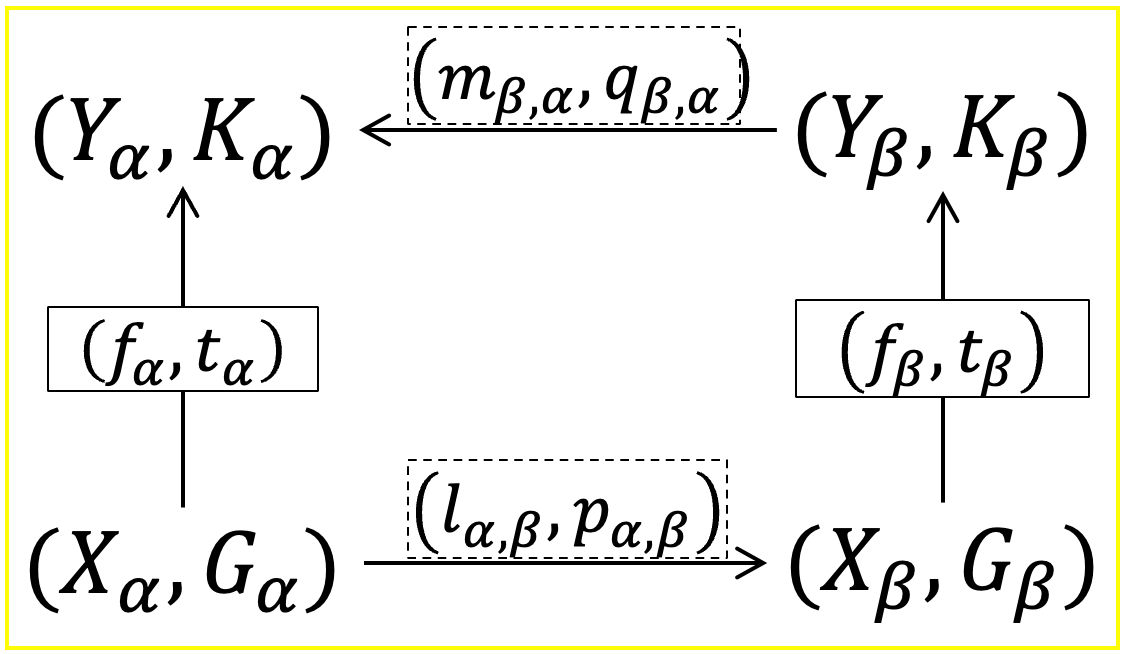}
\caption{Example of a crossed translation pair $\pi\colon (f_\alpha,t_\alpha) \leftrightharpoons_{\mathbf{T}} (f_\beta,t_\beta)$. We distinguish by solid and dashed blocks the GEOs in $\mc{G}$ from the GENEOs in $Hom(\Cat{T}$.}
\label{fig:diagram}
\end{center}
\end{figure}

Next, we define the cost of a crossed translation pair.
\begin{definition}
Let $\pi=\Big((l_{\alpha,\beta},p_{\alpha,\beta}),(m_{\beta,\alpha},q_{\beta,\alpha})\Big)$ be a crossed translation pair from $(f_\alpha,t_\alpha)\colon (X_\alpha,G_\alpha)\to (Y_\alpha,K_\alpha)$ to $(f_\beta,t_\beta)\colon (X_\beta,G_\beta)\to (Y_\beta,K_\beta)$.
The \emph{functional cost} of $\pi$, written $\cost(\pi)$, is defined as follows.
\begin{equation}
\label{eq:loss}
    \cost(\pi) := \frac{1}{|X_\alpha|}\sum_{x \in X_\alpha}
d_{\mathrm{dt}}\Big((m_{\beta,\alpha}\circ f_\beta\circ l_{\alpha,\beta})(x),
f_\alpha(x)\Big)
\end{equation} 
\end{definition}

\begin{remark}
Note that in \Cref{eq:loss}, $|X_\alpha|$ denotes the cardinality of the set $X_\alpha$. Whenever such set is infinite the cost is not defined. Although this never happens in practical cases, one can easily generalize \eqref{eq:loss} to deal with infinite sets by enriching $X_\alpha$ with a Borel probability measure: see \eqref{eq:loss2} in the Appendix.
\end{remark}

Intuitively, the value $\cost(\pi)$ measures the distance of the two paths in the diagram in Figure \ref{fig:dist}. With this, one can easily define a distance between GEOs. 

\begin{figure}[t]
\begin{center}
\includegraphics[width=0.6\textwidth]{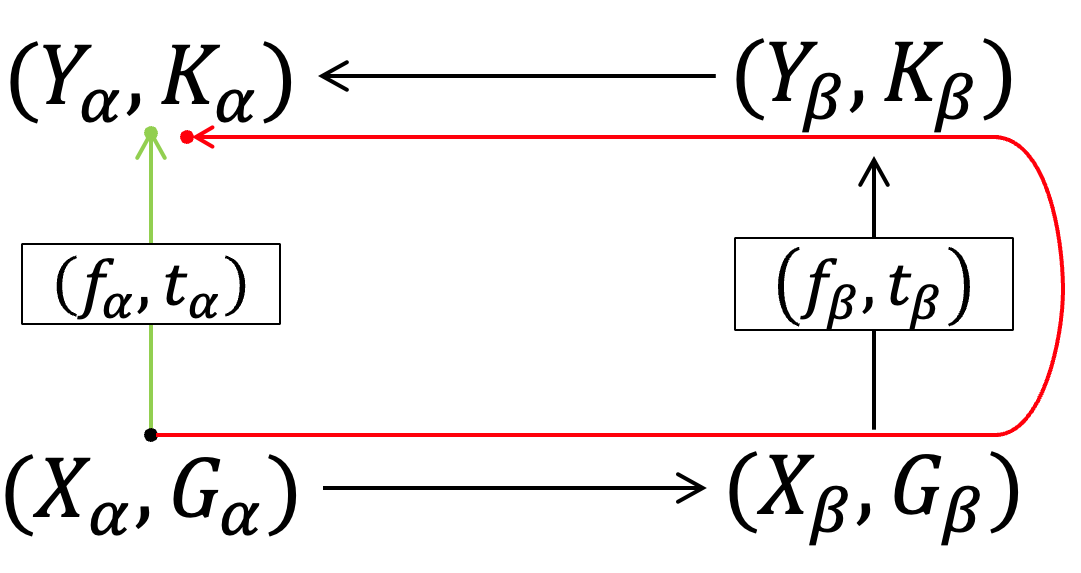}
\caption{The surrogate distance measures how far the diagram is to commute.}
\label{fig:dist}
\end{center}
\end{figure}

\begin{definition}\label{E-distance}
Let $(f_\alpha,t_\alpha)$ and $(f_\beta,t_\beta)$ be two GEOs in $\mathcal{G}$.  
The \emph{surrogate distance} of $(f_\beta,t_\beta)$ from $(f_\alpha,t_\alpha)$, written $h_{\mb{O}}\Big((f_\alpha,t_\alpha),(f_\beta,t_\beta)\Big)$, is defined as  
\begin{equation}
\label{eq:semi-metric}
    \inf \{\cost(\pi) \mid \pi\colon (f_\alpha,t_\alpha)\leftrightharpoons_{\mathbf{T}}(f_\beta,t_\beta)\}
\end{equation}
\end{definition}
We emphasize that all considered GENEOs to define crossed pairs of translations  must be in $\Cat{T}$.
The possibility of choosing $\Cat{T}$ in different ways reflects the various approaches an observer can use to judge the similarity between data. 

\begin{example}\label{ex:findelity}
Consider the smallest possible $\Cat{T}$ (that is, no arrows between different perception spaces and only the identity between equal spaces) representing an observer who cannot translate the data. In this case, $h_{\mb{O}}\Big((f_\alpha,t_\alpha),(f_\beta,t_\beta)\Big)=\infty$ whenever $(f_\alpha,t_\alpha)$ and $(f_\beta,t_\beta)$ act on different perception spaces, since there is no translation pair $\pi\colon (f_\alpha,t_\alpha)\leftrightharpoons_{\mathbf{T}}(f_\beta,t_\beta)$. Whenever the perception spaces are the same, there is only one translation pair, formed by two identity GENEOs. Thus the surrogate distance of $(f_\beta,t_\beta)$ from $(f_\alpha,t_\alpha)$ collapses to the cost of such translation pair, that is, 
\[
  \frac{1}{|X_\alpha|}\sum_{x \in X_\alpha}
d_{\mathrm{dt}}\Big( f_\beta(x),
f_\alpha(x)\Big)
\]
Note that whenever $d_{\mathrm{dt}}$ assigns $0$ to equal elements and $1$ to different ones, this coincides with the standard notion of \emph{fidelity} \cite{NautaTPNPSSKS23}.
\end{example}

\begin{theorem}
\label{prop:semi-metric}
    The function $h_\mb{O}$ is a hemi-metric on $\mathcal{G}$.
\end{theorem}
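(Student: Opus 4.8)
To show that $h_{\mb{O}}$ is a hemi-metric on $\mathcal{G}$, I need to verify reflexivity $(R)$ — that $h_{\mb{O}}\big((f_\alpha,t_\alpha),(f_\alpha,t_\alpha)\big)=0$ — and the triangle inequality $(T)$. Symmetry is \emph{not} required (and indeed will fail in general, since the costs of the two legs of the square are not symmetric and since $\Cat{T}$ need not be closed under formal reversal), which is exactly why the statement claims only a hemi-metric.

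\emph{Reflexivity.} For any GEO $(f_\alpha,t_\alpha)\colon (X_\alpha,G_\alpha)\to(Y_\alpha,K_\alpha)$ in $\mathcal{G}$, the pair $\pi_{\id}:=\big((\id_{X_\alpha},\id_{G_\alpha}),(\id_{Y_\alpha},\id_{K_\alpha})\big)$ is a crossed translation pair from $(f_\alpha,t_\alpha)$ to itself, since identities are GENEOs and every category $\Cat{T}$ contains all identities on its objects by definition. Its functional cost is
\[
\cost(\pi_{\id})=\frac{1}{|X_\alpha|}\sum_{x\in X_\alpha} d_{\mathrm{dt}}\big(f_\alpha(x),f_\alpha(x)\big)=0
\]
by property $(R)$ of the pseudo-metric $d_{\mathrm{dt}}$. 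Since $h_{\mb{O}}$ is an infimum of nonnegative quantities containing the value $0$, it equals $0$.

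\emph{Triangle inequality.} Let $(f_\alpha,t_\alpha)$, $(f_\beta,t_\beta)$, $(f_\gamma,t_\gamma)$ be GEOs in $\mathcal{G}$. Given $\varepsilon>0$, pick crossed pairs $\pi=\big((l_{\alpha,\beta},p_{\alpha,\beta}),(m_{\beta,\alpha},q_{\beta,\alpha})\big)\colon (f_\alpha,t_\alpha)\leftrightharpoons_{\Cat{T}}(f_\beta,t_\beta)$ and $\rho=\big((l_{\beta,\gamma},p_{\beta,\gamma}),(m_{\gamma,\beta},q_{\gamma,\beta})\big)\colon (f_\beta,t_\beta)\leftrightharpoons_{\Cat{T}}(f_\gamma,t_\gamma)$ with $\cost(\pi)\le h_{\mb{O}}\big((f_\alpha,t_\alpha),(f_\beta,t_\beta)\big)+\varepsilon$ and $\cost(\rho)\le h_{\mb{O}}\big((f_\beta,t_\beta),(f_\gamma,t_\gamma)\big)+\varepsilon$. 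Compose them: set $\sigma:=\big((l_{\beta,\gamma}\circ l_{\alpha,\beta},\,p_{\beta,\gamma}\circ p_{\alpha,\beta}),(m_{\beta,\alpha}\circ m_{\gamma,\beta},\,q_{\beta,\alpha}\circ q_{\gamma,\beta})\big)$. Since $\Cat{T}$ is a category it is closed under composition, so $\sigma$ is a legitimate crossed pair $(f_\alpha,t_\alpha)\leftrightharpoons_{\Cat{T}}(f_\gamma,t_\gamma)$. It then suffices to bound $\cost(\sigma)$ by $\cost(\pi)+\cost(\rho)$; combined with the choices above and letting $\varepsilon\to 0$, the triangle inequality for $h_{\mb{O}}$ follows by taking infima.

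For the cost bound, fix $x\in X_\alpha$ and abbreviate $L=l_{\alpha,\beta}$, $M=m_{\beta,\alpha}$, $L'=l_{\beta,\gamma}$, $M'=m_{\gamma,\beta}$. Insert the intermediate term $M\big(f_\beta(L(x))\big)$ and use the triangle inequality $(T)$ of $d_{\mathrm{dt}}$:
\[
d_{\mathrm{dt}}\Big(M\big(M'\big(f_\gamma(L'(L(x)))\big)\big),\,f_\alpha(x)\Big)
\le d_{\mathrm{dt}}\Big(M\big(M'(f_\gamma(L'(Lx)))\big),\,M\big(f_\beta(Lx)\big)\Big)
+ d_{\mathrm{dt}}\Big(M\big(f_\beta(Lx)\big),\,f_\alpha(x)\Big).
\]
The second summand, averaged over $x\in X_\alpha$, is exactly $\cost(\pi)$. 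For the first summand, $M$ is the data-part of a GENEO hence non-expansive, so it is bounded by $d_{\mathrm{dt}}\big(M'(f_\gamma(L'(Lx))),\,f_\beta(Lx)\big)$; and as $x$ ranges over $X_\alpha$, the element $Lx$ ranges over a subset of $X_\beta$, so averaging this over $x\in X_\alpha$ is bounded above by $\frac{1}{|X_\beta|}\sum_{x'\in X_\beta} d_{\mathrm{dt}}\big(M'(f_\gamma(L'x')),\,f_\beta(x')\big)=\cost(\rho)$ — provided $L$ is injective; in general one still gets the bound by the averaging argument over the (possibly multiset) image, which I will spell out carefully. Summing the two averaged bounds gives $\cost(\sigma)\le\cost(\pi)+\cost(\rho)$, as desired.

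\emph{Main obstacle.} The delicate point is precisely this last averaging step: relating $\frac{1}{|X_\alpha|}\sum_{x\in X_\alpha} d_{\mathrm{dt}}\big(M'(f_\gamma(L'(Lx))),f_\beta(Lx)\big)$ to $\cost(\rho)=\frac{1}{|X_\beta|}\sum_{x'\in X_\beta}d_{\mathrm{dt}}(\cdots)$ when $L=l_{\alpha,\beta}$ is not a bijection, so that the two normalizations and index sets do not match. One clean way to handle this is to work with the measure-theoretic formulation of $\cost$ from the Appendix (equation~\eqref{eq:loss2}), pushing the uniform measure on $X_\alpha$ forward along $L$ and using that GENEOs are measurable and distance-non-increasing; alternatively, one restricts attention to the (standard) case where all translation GENEOs in $\Cat{T}$ act bijectively on the underlying data sets, which makes the change of variables exact. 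I expect the cleanest published argument to adopt the probabilistic reformulation so the triangle inequality holds verbatim without side conditions.
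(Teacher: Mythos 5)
Your proof follows essentially the same route as the paper's: reflexivity via the identity crossed pair, and the triangle inequality by composing the two crossed pairs, using non-expansiveness of $m_{\beta,\alpha}$ together with the triangle inequality of $d_{\mathrm{dt}}$, and resolving the change-of-variables step exactly as the paper does, namely by passing to the measure-theoretic cost \eqref{eq:loss2} with translation GENEOs assumed measure-decreasing. One small correction to a side remark: injectivity of $l_{\alpha,\beta}$ alone would not make the averaging step work, since the average of a nonnegative function over a proper subset of $X_\beta$ need not be bounded by its average over all of $X_\beta$; the measure-domination hypothesis (or a genuine bijection matching the normalizations) is really needed, as the paper's appendix makes explicit.
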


%
Notice that while $h_\mb{O}$ is a hemi-metric, one can easily get a pseudo-metric by making it symmetric: $d_\mb{O}:=\max\left(h_\mb{O}\Big((f_\alpha,t_\alpha),(f_\beta,t_\beta)\Big),h_\mb{O}\Big((f_\beta,t_\beta),(f_\alpha,t_\alpha)\Big)\right)$.
We choose to stay with the non-symmetric distance $h_\mb{O}$ since it should measures how far the observer $\mb{O}$ perceives the surrogate $(f_\beta,t_\beta)$ from the GEO to interpret $(f_\alpha,t_\alpha)$. We believe that for this kind of measurement, it is more natural to drop symmetry, like, for instance, in the case of fidelity (\Cref{ex:findelity}).


\subsection{Measures of Complexity}
\label{subsec:complexity}
In \Cref{subsec:cat} we have introduced string diagrams allowing for combining several building blocks taken from a given set of symbols $\Gamma$ and we have illustrated how the semantics assigns to each diagram a GEO. 
Here, we establish a way to measure the \textit{comfort} that an observer $\mathbb{B}$ has in dealing with a certain diagram. We call such measure the \emph{complexity} of a diagram relative to $\mathbb{O}$. 

To give a complexity to each diagram, we exploit the complexity assignment $\mc{C} \colon \Gamma \to \mathbb{R}^+$ of the observer $\mathbb{O}$ that provides a complexity to each building block.

\begin{definition}
\label{def:complexity}
Let $c$ be a diagram in $\Cat{Diag}_\Gamma$.
The \emph{complexity of a diagram} $c$ (relative to the observer $\mathbb{O}$), written $\com{c}$, is inductively  as follows:
\[
\begin{array}{rclrclrcl}
\com{\nbox{g}[][][][]} & := & \mathcal{C}(g) \;&\;
\com{\emptyCirc{}} & := & 0 \;&\;
\com{c_1\otimes c_2} & := & \com{c_1} + \com{c_2} \\
\com{\copyCirc[A]} & := & 0 \;&\;
\com{\symmCirc[A][B]} & := & 0 \;&\;
\com{c_1\circ c_2} & := & \com{c_1} + \com{c_2} \\
\com{\idCirc[A]} & := & 0 \;&\;
\com{\discardCirc[A]} & := & 0 
\end{array}
\]
\end{definition}
Shortly, the complexity of a diagram $c$ is the sum of all the complexities of the basic blocks occurring in $c$. 


\begin{example}[Number of Parameters]
The set of basic blocks $\Gamma$ may contain several generators that depend on one or more parameters whose value is usually learned during the training process. 
A common way to measure the complexity of a model is simply by counting the number of its parameter. This can be easily accommodated in our theory by fixing the function $\mathcal{C} \colon \Gamma \to \mathbb{R}^+$ to be the one mapping each generator $g\in \Gamma$ into its number of parameters. It is thus trivial to see that for all circuit $c$, $\com{c}$ is exactly the total number of parameters of $c$.
\end{example}

\begin{example}[Number of Nonlinearities]
Let us assume that $\Gamma$ contains  as building blocks the functions computing the linear combinations of $n$ given inputs, for every $n\in\N$ and for each tuple of real valued coefficients. Moreover, $\Gamma$ contains as building blocks some classic activation functions in machine learning, such as the Sigmoid and the ReLu activation function. For instance, in our theory an observer may define the complexity 
$\mathcal{C} \colon \Gamma \to \mathbb{R}^+$ to assign to each linear function the complexity of 0 and to each nonlinear function the complexity of 1. Then the complexity of each circuit $c$, $\com{c}$ is exactly the number of nonlinear functions applied in the circuit, e.g. the number of neurons in a multi-layer perceptrons with ReLu activation functions in the hidden layers and Sigmoid activation function in the output layer.
\end{example}

We notice that we defined the complexity function on syntactic diagrams and not on semantic objects. Indeed, an operator, like e.g. a GEO, can be realized by possibly several different diagrams, however the complexity of the different diagrams should be different. To understand this choice, imagine one has to define the complexity of a function that, given a certain array of integers, returns the array in ascending order. Clearly the complexity of this function should depend on the specific algorithm that is used to produce the output given a certain input, and not on the function itself.

\section{Learning and Explaining via GE(NE)Os Diagrams}
\label{sec:framework}

\Cref{sec:diagrams} introduces the basic definitions that can be operatively used to instantiate our framework. Indeed, \Cref{eq:semi-metric} defines a hemi-metric that can be used as a loss function to train a surrogate GEO to approximate another GEO, whereas \Cref{def:complexity} establishes a way to measure their interpretability in terms of elementary blocks. This section first shows how the learning of surrogate models is defined (\Cref{subsec:learning}), and then how we can easily extract explanations from the learned surrogate models (\Cref{subsec:int}). For the following we assume to have fixed an observer $\mathbb{O}=(\Cat{T},\mc{C})$ interested in a set of GEOs $\mathcal{G}$.

\subsection{Learning via GENEOs' Diagrams}
\label{subsec:learning}

Given two GEOs $\alpha,\beta\in\mc{G}$, with 
$\alpha=(f_\alpha,t_\alpha):(X_\alpha,G_\alpha)\to (Y_\alpha,K_\alpha)$
and $\beta=(f_\beta,t_\beta):(X_\beta,G_\beta)\to (Y_\beta,K_\beta)$, and the category $\Cat{T}$ of translation GENEOs, 
the hemi-metric $h_{\mb{O}}$ as defined in \Cref{eq:loss} expresses the cost of approximating $\alpha$ with $\beta$ via the available translation pairs, as illustrated in \Cref{fig:dist}. 
In order to apply our  framework to the problem of learning interpretable surrogate functions of a certain model on a certain dataset, from now on we assume that $\alpha$ is given, $\beta$ is learnable by depending on a set of parameters $\theta\in\mathbb{R}^n$, and $X_{dt}$ denotes the training set collecting the available input data. 
Therefore, learning $f_\beta$ can be cast as the problem of finding the parameters $\theta$, such that $h_{\mb{O}}(\alpha,\beta)$ is minimized on $X_{dt}$, i.e. that provide the lowest $cost(\pi)$ amongst  the $\pi=\Big((l^\pi_{\alpha,\beta},p^\pi_{\alpha,\beta}),(m^\pi_{\beta,\alpha},q^\pi_{\beta,\alpha})\Big)\colon \alpha\leftrightharpoons_{\mathbf{T}}\beta$:
\begin{equation}
\label{def:lea}
    \theta^*
    =\argmin_{\theta} \left(\inf_{\pi} \frac{1}{|X_{dt}|}\sum_{x\in X_{dt}} d_{\mathrm{dt}}\Big(m^\pi_{\beta,\alpha}(f_\beta(l^\pi_{\alpha,\beta}(x);\theta)), f_\alpha(x)\Big)\right)\ .
\end{equation}
From our definition, the two perception spaces  may be different. However, most frequently when learning surrogate functions, we have $W_\alpha = W_\beta = W$, for $W\in\{X,Y,\B G,\B K\}$, and there is only the translation pair $\pi=\big((id_{X},id_G)(id_{Y},id_K)\big)$. Thus, \Cref{def:lea} simplifies in $\argmin_{\theta}  \frac{1}{|X_{dt}|}\sum_{x\in X_{dt}} d_{\mathrm{dt}}\Big(f_\beta(x;\theta), f_\alpha(x)\Big)$,
which corresponds to the fidelity measure between $f_\alpha$ and $f_\beta$, commonly used in XAI.

\begin{example}[Classifier Explanations]
\label{ex:class}
Consider a classifier $f_\alpha$ equivariant w.r.t. the groups $\B G_\alpha$ and $\B K_\alpha=\B 1$, being $\B 1$ the trivial group. As an example, \Cref{fig:classification_diagram} illustrates two different GEOs $f_\beta$ and $f_\gamma$ that can be used to explain $f_\alpha$. Notice that if the observer $\mb{O}$ has no access to $f_\alpha$, i.e. $\mb{O}$ does not know how $f_\alpha$ is built (i.e. $f_\alpha$ is a black-box for $\mb{O}$), then $f_\alpha$ should be an atomic block in $\Gamma$. In this case, the observer $\mb{O}$ assigns to $f_\alpha$ the complexity $\mathcal{C}(f_\alpha)=\infty$.
\end{example}

\begin{example}[Supervised Learning]
Wether $f_\alpha$ denotes the function associating to each training input its label (i.e. the \textit{supervisor}), then $f_\beta$ and $f_\gamma$ from \Cref{fig:classification_diagram} are simply two models trained via supervised learning, and their distance to $f_\alpha$ is the accuracy (that can be thought of as the fidelity w.r.t. the ground-truth). 
\end{example}
$f_\beta$ and $f_\gamma$ differ in \Cref{ex:class} only from the fact that $f_\beta$ is equivariant on the same group $\B G_\alpha$ than $f_\alpha$, whereas $f_\gamma$ might not. In fact, in case $f_\gamma$ is not equivariant on $\B G_\alpha$ we may prove that $f_\gamma$ will be surely a non-optimal approximation.

\begin{proposition}
\label{th:upper_bound}
Let $\mathbf{T}$, $(f_\alpha,t_\alpha)$, $(f_\beta,t_\beta)$ as in Example \ref{ex:findelity} and let $NE$ be the set $\{(g,x)\in G_\alpha\times X \mid f_\beta(x) \neq f_\beta(g*x)\}$, i.e., the set containing all those couples falsifying equivariance of $f_\beta$ w.r.t. $G_\alpha$. Then
\[h_{\mathbb{O}}((f_\alpha,t_\alpha),(f_\beta,t_\beta)) \geq \frac{|NE|}{2\cdot |G_\alpha|}\]
\end{proposition}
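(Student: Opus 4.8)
The plan is to first reduce $h_{\mathbb O}$ to an ordinary fidelity via \Cref{ex:findelity}, and then to bound the number of pairs witnessing the failure of $G_\alpha$-equivariance of $f_\beta$ by the number of inputs on which $f_\beta$ and $f_\alpha$ disagree, using a short double-counting argument. For the reduction: since $\mathbf T$ is the smallest translation category and $f_\alpha,f_\beta$ are compared over the common data space $X$ (which carries the $G_\alpha$-action), the only crossed translation pair available is the one built from identities, so by \Cref{ex:findelity}, with $d_{\mathrm{dt}}$ the discrete metric, $h_{\mathbb O}((f_\alpha,t_\alpha),(f_\beta,t_\beta)) = |E|/|X|$ where $E := \{x\in X \mid f_\beta(x)\neq f_\alpha(x)\}$ is the disagreement set. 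It therefore suffices to establish the cardinality estimate $|NE|\leq 2\,|G_\alpha|\,|E|$.

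For that I would use the equivariance of the \emph{target} $f_\alpha$: being a classifier (codomain group trivial, as in \Cref{ex:class}) and $G_\alpha$-equivariant, $f_\alpha$ is constant on $G_\alpha$-orbits, i.e.\ $f_\alpha(g*x) = t_\alpha(g)*f_\alpha(x) = f_\alpha(x)$ for all $g\in G_\alpha$ and $x\in X$. Consequently a pair $(g,x)\in NE$ cannot satisfy both $x\notin E$ and $g*x\notin E$, for then $f_\beta(x)=f_\alpha(x)=f_\alpha(g*x)=f_\beta(g*x)$, contradicting $f_\beta(x)\neq f_\beta(g*x)$. Hence $NE\subseteq\{(g,x)\mid x\in E\}\cup\{(g,x)\mid g*x\in E\}$. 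The first set has exactly $|G_\alpha|\,|E|$ elements; for the second, for each fixed $g$ the map $x\mapsto g*x$ is a bijection of $X$ (with inverse $x\mapsto g^{-1}*x$), so $|\{x\mid g*x\in E\}|=|E|$ and the set again has $|G_\alpha|\,|E|$ elements. Adding the two bounds gives $|NE|\leq 2\,|G_\alpha|\,|E|$, from which the stated lower bound on $h_{\mathbb O}$ follows by the reduction of the first paragraph.

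I do not expect a real obstacle here, since the whole argument is elementary set counting; the two points I would be careful about are that it is the equivariance of $f_\alpha$, and not of the surrogate $f_\beta$, that drives the inclusion for $NE$ (so the trivial-codomain classifier hypothesis is genuinely used), and that the factor $2$ is unavoidable because a non-equivariant pair $(g,x)$ is captured through \emph{either} of its endpoints lying in $E$, and both contributions have to be counted.
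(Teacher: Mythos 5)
Your argument is essentially the paper's own proof: your decomposition $NE\subseteq\{(g,x)\mid x\in E\}\cup\{(g,x)\mid g*x\in E\}$ is exactly the paper's inclusion $\overline{C}\subseteq\overline{A}\cup\overline{B}$ (with $\overline{A}=\{(g,x)\mid x\in E\}$, $\overline{B}=\{(g,x)\mid g*x\in E\}$, $\overline{C}=NE$), your bijection $x\mapsto g*x$ plays the same role as the paper's maps $\iota,\kappa$ witnessing $|\overline{A}|=|\overline{B}|$, and both arguments use the invariance of $f_\alpha$ (trivial codomain group) at the same point. The only remark worth making is that from $|NE|\le 2\,|G_\alpha|\,|E|$ and $h_{\mathbb{O}}=|E|/|X|$ one actually obtains $h_{\mathbb{O}}\ge |NE|/(2\,|G_\alpha|\,|X|)$, so your closing sentence silently drops a factor of $|X|$ --- but the paper's own last line (asserting $|\overline{A}|=|G_\alpha|\cdot h_{\mathbb{O}}$ rather than $|G_\alpha|\cdot|X|\cdot h_{\mathbb{O}}$) contains exactly the same normalization slip, so you have reproduced the intended argument faithfully.
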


\begin{remark}
    As stated in the introduction, single-hidden-layer neural networks are universal approximators but may require a large number of hidden neurons, increasing complexity. If we cap the model's complexity, a neural network may not always approximate a given model accurately. \Cref{th:upper_bound} further establishes a fidelity lower bound based on non-equivariant datapoints.
\end{remark}


\subsection{Suitable Surrogate GEOs}
\label{subsec:int}


We say that a GEO $(f_\alpha,t_\alpha)$ is \emph{explained} by another GEO $(f_\beta,t_\beta)$ at the level $\varepsilon$ for an observer $\mathbb{O}=(\Cat{T},\mc{C})$  if:
\[1.h_{\mb{O}}\Big((f_\alpha,t_\alpha),(f_\beta,t_\beta)\Big)\le\varepsilon;\qquad
2.\com{(f_\beta,t_\beta)} \le \com{(f_\alpha,t_\alpha)}.
\]
The second condition means that the complexity of the surrogate explaining model $(f_\beta,t_\beta)$ should be lower than the complexity of the given model $(f_\alpha,t_\alpha)$. 
While not guaranteed, this requisite can be ensured by designing $f_\beta$ with a suitable strategy. Recall that a model's complexity is defined by atomic building blocks in $\Gamma$, which are combined to form the model. Using the simplest possible blocks helps limit complexity, though their selection depends on the observer's knowledge and interpretability.
Moreover, different studies \cite{BeFrGiQu19} have shown how a proper domain-informed selection of GE(NE)Os, may strongly decrease the number of parameters necessary to solve a certain task w.r.t. a standard neural networks (as also shown by our experiments cf. \Cref{tab:params}).

\begin{example}
Given a set of GEOs $(f_i,t_i)\in \Gamma$, with complexity $k_i=\mc{C}((f_i,t_i))$, we can define $f_\beta$ as a linear combination of $(f_1,t_1), \ldots,(f_n,t_n)$. According to \Cref{def:complexity}, the complexity  $\com{f_\beta}$ would be $k_1+\ldots+k_n$, plus eventually the complexities of the scalar multiplications.
\end{example}

\section{Experiments}
\label{sec:exp}

In order to validate experimentally our theory, we build a classification task on MNIST dataset and rely on our framework to appropriately define an interpretable surrogate model. With our experiments we aim to answer two main research questions: wether personalized complexity measures are able to properly formalize an observer subjectivity, and if knowledge of the domain and of the complexity measured by an observer can lead to ad-hoc surrogate models with a better trade-off between complexity and accuracy. Thus for all the reported results, we assume to have fixed one (or more) given observers.


\subsection{Data}
The MNIST dataset contains $70,000$ grayscale (values from $0$ to $255$) images of handwritten digits (0-9), each image being $28 \times 28$ pixels. We linearly rescale the images so that the values lay in $[0,1]$. The images rescaled belong to $\{0,\frac{1}{255},\dots,1\}^{28 \times 28}$
We split our dataset into three stratified random disjoint subsets: training, validation, and test set, of $60\%$, $20\%$ and 20\% of images respectively.

\subsection{Models}

As opaque model, we employ a standard CNN, with the Tiny-Vgg architecture, that is composed by two convolutional layers as tail and a linear classifier head.
To realize our GEOs surrogate approximation, we use two different architectures.
From the MNIST training set, we extract randomly a set of patterns $p_i$.
These patterns are square cutouts of train images, with height ($H$) and width ($W$) of choice and with a center point chosen with probability proportional to the intensity of the image $x$:
\begin{align*}
p_i = x|_{Q_i}, \qquad & (c_{x_i}, c_{y_i}) \sim x \\
Q_i =\{c_{x_i} -\frac{W}{2},\dots,c_{x_i} +\frac{W}{2}\} &\times \{c_{y_i} -\frac{H}{2},\dots,c_{y_i} +\frac{H}{2}\}
\end{align*}

For each image $x$ we identify the presence of a pattern $p_i$ in position $(i,j)$ with the following function:
\begin{align*}
&f(x)_{p_i}:\{0,\frac{1}{255},\dots,1\}^{28 \times 28} \rightarrow \{0,\frac{1}{199920},\dots,1\}^{28 \times 28}  \\
&f_{p_i}(x)_{n,m} = 1- \frac{\sum_{(i,j) \in Q_i}\left|x((i,j)+(n,m))-p_i((i,j))\right|}{\textup{vol } Q_i}
\end{align*}

The choice of these specific patterns can be motivated by a domain knowledge or by the preferences that an observer can inject through a thoughtful design of theirs GEOs' building blocks for the classification task. 

The first GEO then performs a Image-Wide-Maxpool to create a flat vector with as many entries as are the patterns, and whose $i^{\textup{th}}$ entry indicates the intensity with which the pattern was identified within the image
\[
L_i = \textup{max}_{n,m}(f_{p_i}(x)_{n,m}) 
\]

These intensities are then linearly combined with an activation function to identify the correct digit
\[
\textup{OUT}^k = \sigma \left( \sum_j \gamma_{j}^k L_j +b^k \right)
\]



The second GEO instead, after the identification of patterns, selects for each pattern the position with the maximum activation through the Channel-Wise-Max ($CWM$)
\[
CWM(f_{p_i}(x))_{n,m}=\begin{cases}
    s &\textup{ if } s=\textup{max}(f_{p_i}(x))\\ 0 & \textup{otherwise}
\end{cases}
\]
These matrices of activations are then linearly combined with a downstream nonlinear activation function
\[
L_{n,m} = \sigma\left(\left(\sum_{i} w_{i} \cdot CWM(f_{p_i}(x))_{n,m}\right) + b_i\right)
\]
The entries of this matrix are then linearly combined with a final sigmoidal activation function to produce the output of the model
\[
\textup{OUT}^k = \sigma \left( \left( \sum_{ij} w_{ji}^k \cdot L_{ji}\right) +b^k \right)
\]


To compare results, we chose a series of simple Multi-Layer Perceptrons, trained directly on the MNIST dataset. In particular, we used MLPs with the following configurations: with no hidden layers, with one hidden layer of dimension $5$, $7$, $20$ and $40$.
The two models with hidden layers of dimension $5$ and $7$ are chosen to create MLPs with number of parameters similar to our GEOs. In \Cref{tab:params} we report the most relevant characteristics of all the models we compare in our experiments. 

\subsection{Experiment Setup}

 
 



\begin{table}[t]
    \centering
     \scriptsize
    \begin{tabular}{ c c c c}
 Model & Params & Epochs & LR  \\ \hline
 \rule{0pt}{2.5ex}CNN & $228010$ & $3$ & $3e-3$   \\ 
 \rule{0pt}{2.5ex}MLP & $31810$ & $57$ & $2e-4 $\\
 MLP & $15910$ & $57$ & $1e-4 $ \\
 MLP & $7850$  &$5$   & $2e-3$ \\
 MLP & $5575$ & $58$ & $2e-4$ \\
 MLP & $3985$ & $58$ & $2e-4$ \\
 MLP & $3190$ & $9$ & $2e-3$ \\
 &&&\\
 &&&\\
 &&&\\
 &&&
 \end{tabular}
 \begin{tabular}{ c c c c c}
  Model & Params & Epochs & LR &PATTERNS \\ \hline
 \rule{0pt}{2.5ex}GEO$_{1}$ & $5010$ & $296$& $3e-3$ & 500\\
 GEO$_{1}$ & $3510$ & $148$  & $7e-3$ & 350\\
 GEO$_{1}$ & $1710$ & $456$ & $2e-2$ & 170\\
 GEO$_{1}$ & $1510$ & $564$ & $1e-2$ & 150\\
 GEO$_{1}$ & $1210$ & $496$ & $2e-2$ & 120\\
 GEO$_{1}$ & $990$ & $198$ & $5e-2$ & 98\\

 \rule{0pt}{2.5ex}GEO$_{2}$ & $8101$ & $39$   & $1e-3$ & 250  \\
 GEO$_{2}$ & $8051$ & $496$   & $1e-3$ & 200  \\
 GEO$_{2}$ & $8001$ & $483$   & $1e-3$ & 150  \\
 GEO$_{2}$ & $7951$ & $335$   & $1e-3$ & 100  \\
 GEO$_{2}$ & $7901$ & $451$   & $1e-3$ & 50  \\

\end{tabular}
    \caption{The different models utilized with the relative hyperparameters, chosen on the validation set.}
    \label{tab:params}
\end{table}

We performed the experiments training all models over the ground truth.

We employed early stopping on the validation set to determine the optimal number of training epochs. 
The accuracy was then evaluated on a separate test set.
We also trained a portion of our models on a rescaled version of MNIST for which every separate group of $2 \times 2$ points was substituted with the max of the four pixels, effectively reshaping the images to $14 \times 14$ and allowing us to compare also models which start from different perception spaces.

\subsection{Results}
We first follow our theoretical framework to define the translation diagram of our experimental setup. Indeed, we are in a classical classification scenario, that can be easily represented by the graph in \Cref{fig:classification_diagram}.

\begin{figure}
    \centering
    \includegraphics[width=0.5\linewidth]{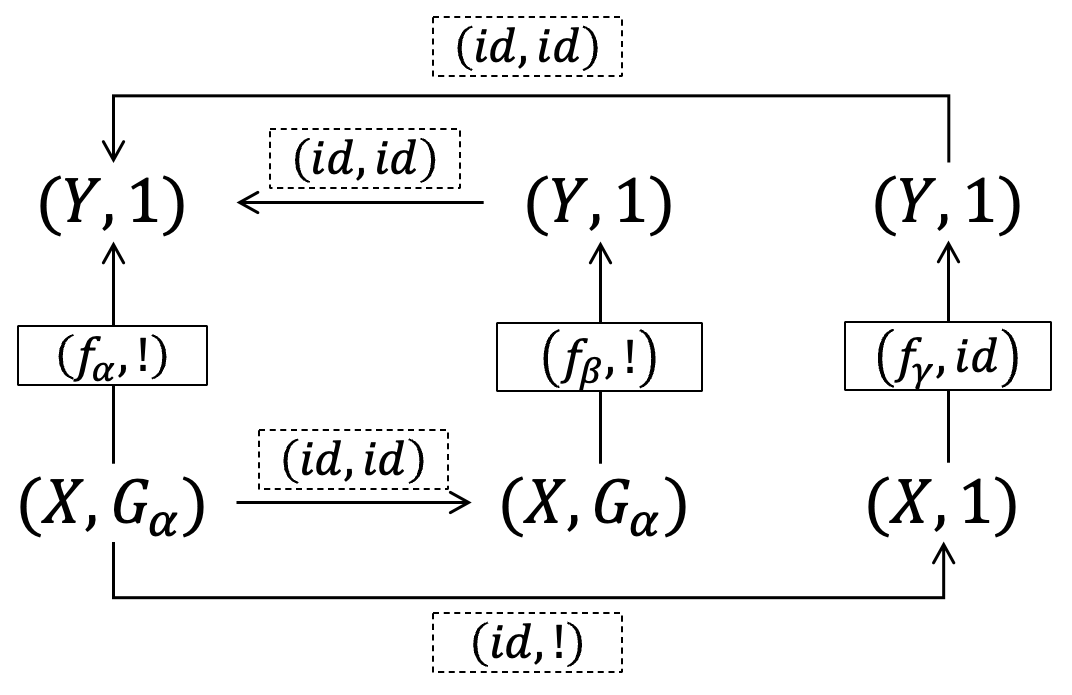}
    \caption{Diagrams of two GEOs explaining a given GEO, where $!$ represents the annihilator homomorphism from any group to the trivial group $\B 1$.}
    \label{fig:classification_diagram}
\end{figure}

We start from the basic perception space $(X,G_\alpha)$ that is, our image dataset $X$ and the group of admissible transformations $G_\alpha$. Here, we have translations as admissible group actions in $G_\alpha$ and $f_\alpha$ is the opaque CNN. Our first GEO model $f_\beta$ operates on the same perception space $(X,G_\alpha)$, as it works on the thorus of the images, preserving translations. Therefore, the translation GENEO is composed given by the couple $(id_X,id_{G_\alpha})$. Both the second GEO and the MLP, represented by $f_\gamma$ instead do not preserve any transformation in the group. Therefore the perception space becomes $(X,\B 1)$ where $\B 1$ denotes the trivial group. Being $!$ the annihilator homomorphism from any group to the trivial group, the translation GENEO for this GEO is given by the couple $(id_X,!)$. All models have $(Y,1)$ as their output, since they all work on the space of output classes. 

To show how the subjectivity of an observer may influence the results in practice, we measure complexity using two measures:
Firstly we assign complexity $1$ to each parameter of the model and we sum over all the parameters. Then we assign complexity $1$ to all the non-linearities of the model, summing over all the non linearities.
We report the performances obtained by the different models in \Cref{tab:results1} and we also compare the results with a different perception space in \Cref{tab:results2} where we present the results for resized images.
\begin{table}[t]
\begin{center}
\scriptsize
\begin{tabular}{ c  c c c c }
 Model & $\mathcal{C}_1$ & $\mathcal{C}_2$ & Acc & Fid \\ \hline
   \rule{0pt}{2.5ex}CNN & $228010$ & 37578 & $97.8 \%$ & \\ 
  \rule{0pt}{2.5ex}MLP & $31810$ & 50 &$96.3 \%$ &$93.6\%$\\ 
 MLP & $15910$ & 30&$94.1 \%$ &$93.5\%$\\
 MLP & $7850$ & 10 &$91.8 \%$ &$90.9\%$\\
 MLP & $5575$ & 17&$90.3 \%$ &$89.6\%$\\
 MLP & $3985$ & 15&$85.4 \%$ &$86.1\%$\\
 MLP & $3190$ & 14&$85.1 \%$ &$80.3\%$\\
 \end{tabular}
 \begin{tabular}{ c c c c c }
  Model & $\mathcal{C}_1$ & $\mathcal{C}_2$ & Acc &Fid\\ \hline
   \rule{0pt}{2.5ex}GEO$_{1}$ & $5010$ & 510 &$96.6 \%$ &$92.5\%$ \\
 GEO$_{1}$ & $3510$ & 360 &$95.4 \%$ &$91.9\%$\\
 GEO$_{1}$ & $1710$ & 180 &$95.3 \%$ &$92.4\%$\\
 GEO$_{1}$ & $1510$ & 160 & $93.7 \%$ &$90.7\%$\\
 GEO$_{1}$ & $1210$ & 130&$93.4\%$ &$91.5\%$\\
 GEO$_{1}$ & $990$ & 100 &$92.2 \%$ &$89.3\%$\\ 
  \rule{0pt}{2.5ex}GEO$_{2}$ & $8101$ & $511$   & $92.9\%$ &$92.5\%$ \\
 GEO$_{2}$ & $8051$ & $411$   & $92.0\%$ &$91.8\%$ \\
 GEO$_{2}$ & $8001$ & $311$   & $92.6\%$ &$91.6\%$ \\
 GEO$_{2}$ & $7951$ & $211$   & $91.3\%$ &$91.1\%$ \\
 GEO$_{2}$ & $7901$ & $111$   & $88.5\%$ &$91.4\%$ 
\end{tabular}
\caption{Models with relative complexities, accuracies and fidelities w.r.t CNN.}
\label{tab:results1}
\end{center}
\end{table}
\begin{table}[t]
\begin{center}
\scriptsize
\begin{tabular}{ c  c c c c }
 Model & $\mathcal{C}_1$ & $\mathcal{C}_2$ & Acc \\ \hline
\rule{0pt}{2.5ex}MLP & $8290$ & $50$ &$96.3 \%$ \\ 
 MLP & $1970$ & $10$ &$91.8 \%$ \\
 MLP & $1459$ & $17$ &$90.3 \%$ \\
 MLP & $1045$ & $15$& $86.3 \%$ \\
 \end{tabular}
 \begin{tabular}{ c c c c c }
  Model & $\mathcal{C}_1$ & $\mathcal{C}_2$ & Acc \\ \hline
   \rule{0pt}{2.5ex}GEO$_{1}$ & $5010$ & $510$ &$95.9 \%$  \\
 GEO$_{1}$ & $3510$ & $360$ & $95.5 \%$ \\
 GEO$_{1}$ & $1710$ & $180$ & $93.6 \%$ \\
 GEO$_{1}$ & $990$ & $100$ &$91.1 \%$ \\ 
  \rule{0pt}{2.5ex}GEO$_{2}$ & $2221$ & $511$   & $93.1\%$  \\
 GEO$_{2}$ & $2121$ & $411$   & $92.5\%$  \\
 GEO$_{2}$ & $2021$ & $111$   & $89.5\%$  \\
\end{tabular}
\caption{The output of some of the models trained on a rescaled version of the starting perception space. The hyperparameters have been kept the same as the non rescaled experiments}
\label{tab:results2}
\end{center}
\end{table}
\begin{figure}[h!]
    \begin{subfigure}[t]{0.45\linewidth}
    \includegraphics[width=\linewidth]{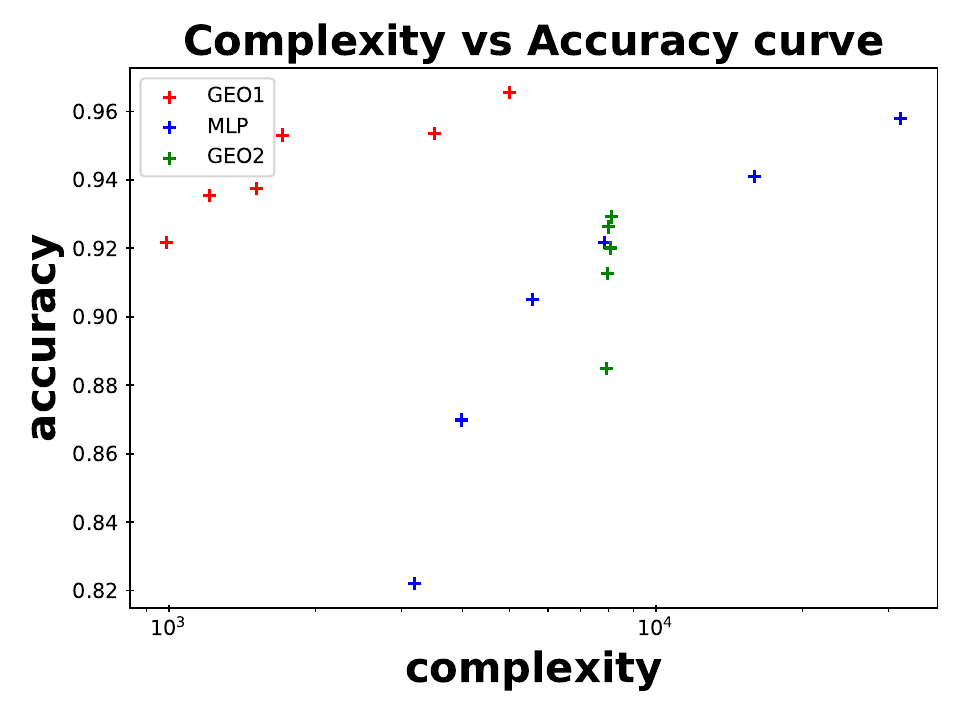}
    \caption{ The proposed GEOs can outperform, at similar complexity, model-agnostic MLPs. Notice that the translational equivariant GEO is able to perform much better at minor complexity.}
    \end{subfigure} \hfill
    \begin{subfigure}[t]{0.45\linewidth}    
    \includegraphics[width=\linewidth]{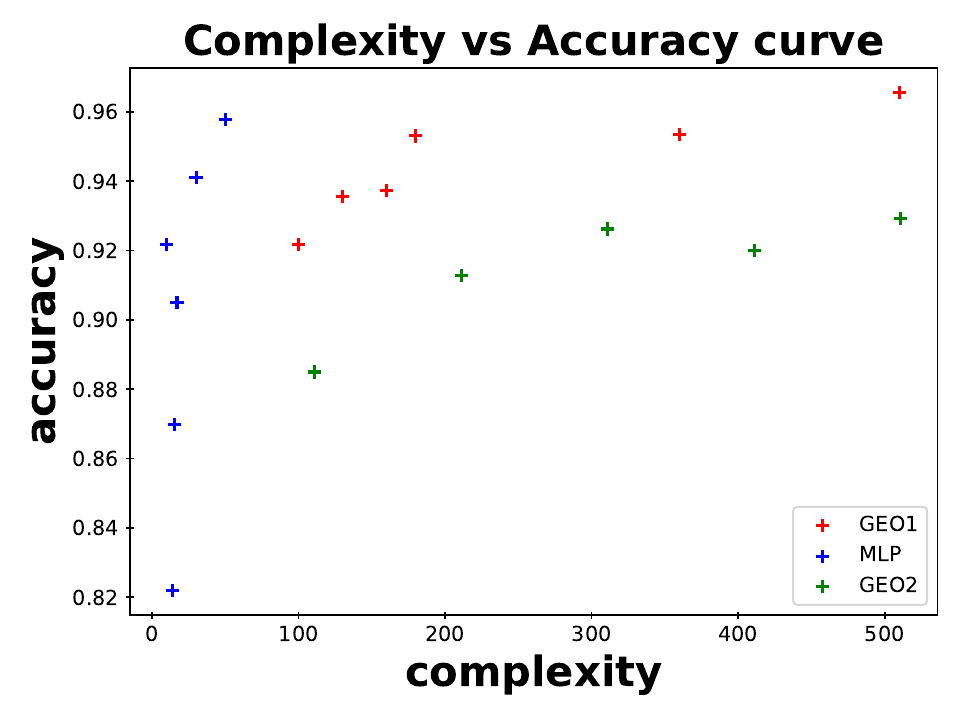}
    \caption{A second observer, who ascribes complexity only to the non-linearities, can have a different complexity vs accuracy curve.}
    \end{subfigure}
    \begin{subfigure}[t]{0.45\linewidth}
    \includegraphics[width=\linewidth]{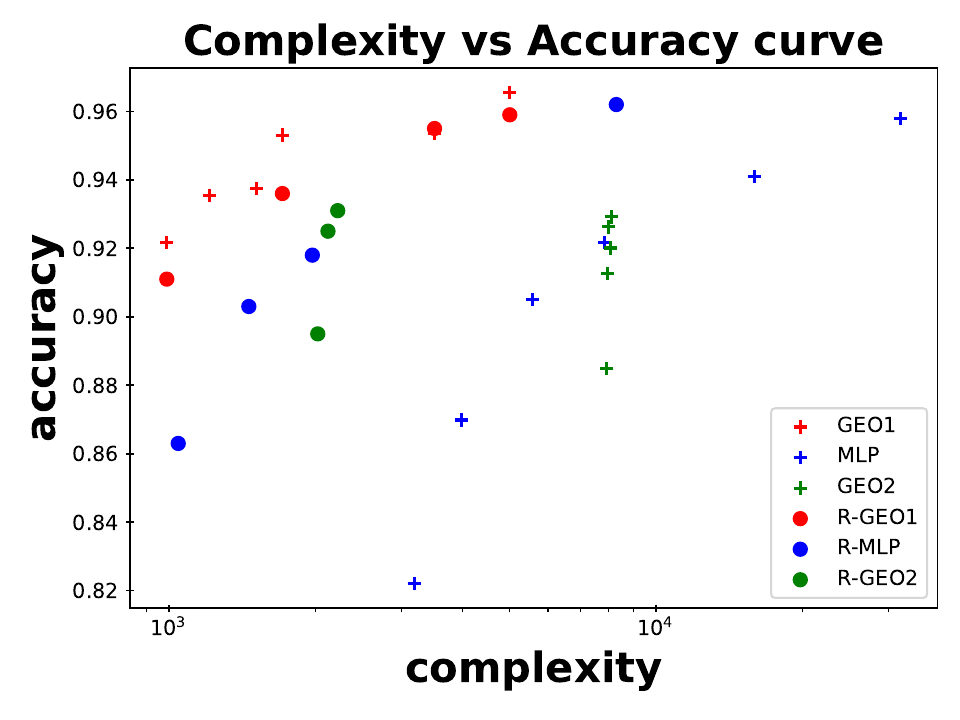}
    \caption{Changing the starting perception space does not affect significantly  the performances, whereas the first observer sees GEO$_1$ with unchanging complexities and GEO$_2$ and the MLP with much smaller complexity.}
    \end{subfigure} \hfill        
    \begin{subfigure}[t]{0.45\linewidth}    
    \includegraphics[width=\linewidth]{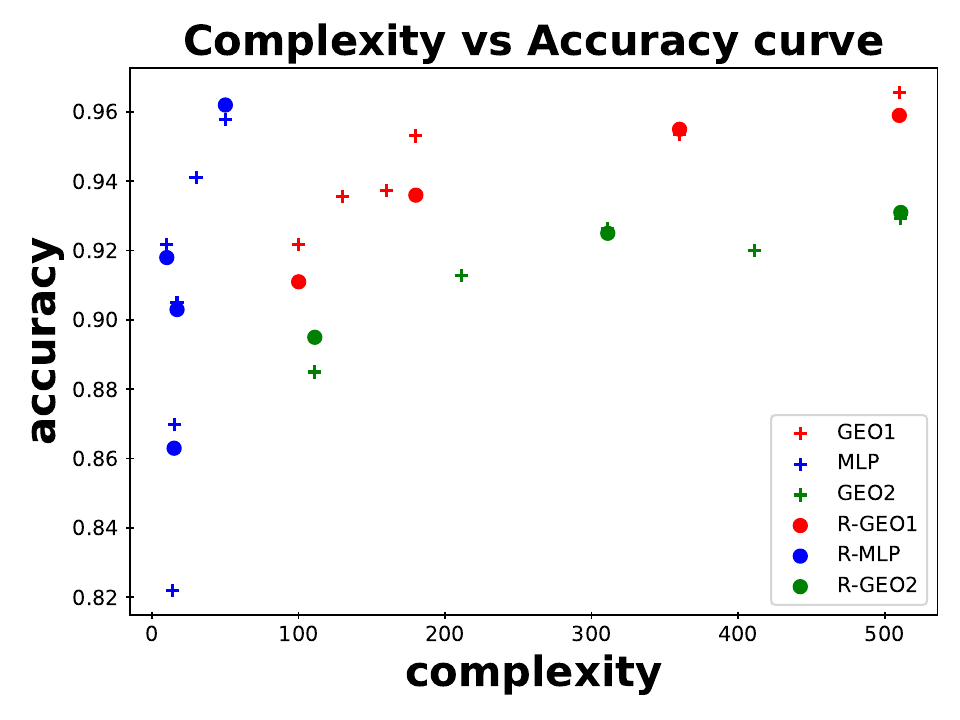}
    \caption{The different observer is not affected by any change in the measured complexity.}
    \end{subfigure}
    \caption{Accuracy vs complexity comparisons.}
    \label{fig:acc-comp}
\end{figure}
The results show that the models built via thoughtful GEOs' building blocks can approximate quite well the original task, providing models that are less complex for both the measure specified by the observer. The complexity vs accuracy curves reprensenting the experiments are shown in \Cref{fig:acc-comp}.

\section{Related Work}
\label{sec:relwork}

Explainable AI has become a fundamental field in AI that covers  methodologies designed to provide understandable explanations of the inner workings of a ML model to a human being \cite{Kay23}. Roughly, XAI methods can be categorized into post-hoc methods, i.e., methods aiming to explain another trained opaque ML models, and interpretable-by-design methods, i.e., ML models that provide explanations to the users inherently, by virtue of their intrinsic transparency \cite{BodriaGGNPR23,YangWWCHLLYWGAK23}. One of the most well-known techniques for post-hoc explnations is to train a surrogate interpretable model to reproduce the same output as an opaque model \cite{LualdiSS22,HeidariTR23,CollarisGJWP23}. In this regard, our paper provides a solid mathematical framework that subsumes both these two paradigms in the same theory.

A key point in XAI is the way the quality of the provided explanations can be measured. For instance, explanations and interpretability can be evaluated qualitatively (user studies) or quantitatively (direct model metrics) \cite{AlangariMMA23a,NautaTPNPSSKS23,MirzaeiMAW24,ZhukovBG23}. Qualitative measures include user performance, engagement, and explanation clarity \cite{PaniguttiBFGPPR23,ColleyKHV23,Schulze-Weddige21,AroraPSCLN22}. Quantitative measures include explanation completeness \cite{WagnerKGHWB19}, fidelity \cite{GuidottiMRTGP19}, classification accuracy \cite{HarderBP20}, and faithfulness \cite{MurdochSLAY19}. Complexity measure of explanations is often used for logic-based explainers \cite{ciravegna2023logic}, but it is generally limited to be a count on the number of propositional variables in a formula. While this can easily be accomodated in our framework, up to the author knowledge, no other methods consider complexity measures from the perspective of an observer, offering flexibility in choosing suitable metrics for the task and models.


While there is a large agreement on the needs for XAI models, there are very few works that try to provide a formal mathematical theory of explanations and/or interpretability for ML models. 
For instance, in \cite{tull2024towards} the authors propose a new class of ``compositionally-interpretable'' models, which extend beyond intrinsically interpretable models to include causal models, conceptual space models, and more, by using category theory. \cite{giannini2024categorical} proposes a framework based on Category Theory and Institution Theory to define explanations and (explainable) learning agents mathematically. However, these works do not provide a practical measure for the interpretability of the models, completely omit the formalization of an observer, and do not take into account the notion of group equivariant operators. Another seminal work is \cite{HoffmanK17,HoffmanMK17}, which provides a more general foundation framework based on properties and desiderata for interpretable ML. However, it does not make any specific mention to a proper mathematical framework.

Finally, our framework is based on the theory of GE(NE)Os, which has been already used to bridge Topological Data Analysis (TDA) and ML. For instance, GENEOs originates from persistent homology with G-invariant non-expansive operators and have been succesfully applied for 1D-signal comparisons and image recognition based  on topological features \cite{FrJa16}. Moreover, GENEOs have been applied to protein pocket detection \cite{BeFrGiQu19,Boeal24} and graph comparison \cite{BoFeFr25}. While as observed in \cite{Boeal24} GENEOs are more inherently interpretable due to a limited dependency on parameters, the theory we present in this paper significantly extend the previous applications, by aiming at the formalization of a more sound XAI theory evaluable quantitatively and based on observers' preferences.

\section{Conclusions and Future Work}
\label{sec:conc}

This work explores the theoretical properties of GE(NE)Os to build a theoretical framework to build surrogate interpretable models, and measure in a rigorous way the trade-off between complexity and performance. By formally proving the properties of our framework and with the experiments that we provide, we lay the groundwork for future research and opening avenues for practical applications in analyzing and interpreting complex data transformations. Our proposal highlights how it is possible to frame the theory of interpretable models through GE(NE)Os and opens new interesting research directions for Explainable AI. One such direction will be to formally describe existing machine learning models in terms of GE(NE)Os, to study the best interpretable approximations for typical tasks. Moreover, an interesting possible research could be to realize interpretable latent space compression through the use of GE(NE)Os.


%
%
%
\bibliographystyle{splncs04}
\bibliography{refexplainability}

\begin{thebibliography}{10}
\providecommand{\url}[1]{\texttt{#1}}
\providecommand{\urlprefix}{URL }
\providecommand{\doi}[1]{https://doi.org/#1}

\bibitem{genperm2023}
Ahmad, F., Ferri, M., Frosini, P.: Generalized permutants and graph {GENEO}s. Machine Learning and Knowledge Extraction  \textbf{5}(4),  1905--1920 (2023). \doi{10.3390/make5040092}

\bibitem{AlangariMMA23a}
Alangari, N., Menai, M.E.B., Mathkour, H., Almosallam, I.: Exploring evaluation methods for interpretable machine learning: A survey. Inf.  \textbf{14}(8), ~469 (2023)

\bibitem{AnRoPo16}
Anselmi, F., Rosasco, L., Poggio, T.: On invariance and selectivity in representation learning. Information and Inference: A Journal of the IMA  \textbf{5}(2),  134--158 (2016). \doi{10.1093/imaiai/iaw009}

\bibitem{AroraPSCLN22}
Arora, S., Pruthi, D., Sadeh, N.M., Cohen, W.W., Lipton, Z.C., Neubig, G.: Explain, edit, and understand: Rethinking user study design for evaluating model explanations. In: {AAAI}. pp. 5277--5285. {AAAI} Press (2022)

\bibitem{Bengio2013}
Bengio, Y., Courville, A., Vincent, P.: Representation learning: A review and new perspectives. IEEE Trans. Pattern Anal. Mach. Intell.  \textbf{35}(8),  1798--1828 (Aug 2013). \doi{10.1109/TPAMI.2013.50}, \url{http://dx.doi.org/10.1109/TPAMI.2013.50}

\bibitem{BeFrGiQu19}
Bergomi, M.G., Frosini, P., Giorgi, D., Quercioli, N.: Towards a topological--geometrical theory of group equivariant non-expansive operators for data analysis and machine learning. Nature Machine Intelligence  \textbf{1}(9),  423--433 (Sep 2019). \doi{10.1038/s42256-019-0087-3}

\bibitem{BoFeFr25}
Bocchi, G., Ferri, M., Frosini, P.: A novel approach to graph distinction through geneos and permutants. Scientific Reports  \textbf{15}(1), ~6259 (Feb 2025). \doi{10.1038/s41598-025-90152-7}

\bibitem{Boeal24}
Bocchi, G., Frosini, P., Micheletti, A., Pedretti, A., Palermo, G., Gadioli, D., Gratteri, C., Lunghini, F., Beccari, A., Fava, A., Talarico, C.: A geometric {XAI} approach to protein pocket detection. In: xAI-2024 Late-breaking Work, Demos and Doctoral Consortium Joint Proceedings - The 2nd World Conference on eXplainable Artificial Intelligence. vol.~3793, pp. 217--224 (2024), \url{https://ceur-ws.org/Vol-3793/paper_28.pdf}

\bibitem{BodriaGGNPR23}
Bodria, F., Giannotti, F., Guidotti, R., Naretto, F., Pedreschi, D., Rinzivillo, S.: Benchmarking and survey of explanation methods for black box models. Data Min. Knowl. Discov.  \textbf{37}(5),  1719--1778 (2023)

\bibitem{newmet2018}
Camporesi, F., Frosini, P., Quercioli, N.: On a new method to build group equivariant operators by means of permutants. In: Machine Learning and Knowledge Extraction, CD-MAKE 2018. Lecture Notes in Computer Science, vol. 11015, pp. 265--272 (2018). \doi{10.1007/978-3-319-99740-7\_18}

\bibitem{CaFrQuSa23}
Cascarano, P., Frosini, P., Quercioli, N., Saki, A.: On the geometric and riemannian structure of the spaces of group equivariant non-expansive operators (2023), \url{https://arxiv.org/abs/2103.02543}

\bibitem{ChoJ19}
Cho, K., Jacobs, B.: Disintegration and bayesian inversion via string diagrams. Math. Struct. Comput. Sci.  \textbf{29}(7),  938--971 (2019)

\bibitem{ciravegna2023logic}
Ciravegna, G., Barbiero, P., Giannini, F., Gori, M., Li{\'o}, P., Maggini, M., Melacci, S.: Logic explained networks. Artificial Intelligence  \textbf{314},  103822 (2023)

\bibitem{cohen2016group}
Cohen, T., Welling, M.: Group equivariant convolutional networks. In: International Conference on Machine Learning. pp. 2990--2999 (2016)

\bibitem{CollarisGJWP23}
Collaris, D., Gajane, P., Jorritsma, J., van Wijk, J.J., Pechenizkiy, M.: {LEMON:} alternative sampling for more faithful explanation through local surrogate models. In: {IDA}. Lecture Notes in Computer Science, vol. 13876, pp. 77--90. Springer (2023)

\bibitem{ColleyKHV23}
Colley, A., Kalving, M., H{\"{a}}kkil{\"{a}}, J., V{\"{a}}{\"{a}}n{\"{a}}nen, K.: Exploring tangible explainable {AI} (tangxai): {A} user study of two {XAI} approaches. In: {OZCHI}. pp. 679--683. {ACM} (2023)

\bibitem{cybenko1989approximation}
Cybenko, G.: Approximation by superpositions of a sigmoidal function. Mathematics of control, signals and systems  \textbf{2}(4),  303--314 (1989)

\bibitem{FrJa16}
Frosini, P., Jab{\l}o{\'n}ski, G.: Combining persistent homology and invariance groups for shape comparison. Discrete Comput. Geom.  \textbf{55}(2),  373--409 (2016). \doi{10.1007/s00454-016-9761-y}

\bibitem{GeArCaal23}
Gerken, J.E., Aronsson, J., Carlsson, O., Linander, H., Ohlsson, F., Petersson, C., Persson, D.: Geometric deep learning and equivariant neural networks. Artificial Intelligence Review  (Jun 2023). \doi{10.1007/s10462-023-10502-7}, \url{https://doi.org/10.1007/s10462-023-10502-7}

\bibitem{gerken2023geometric}
Gerken, J.E., Aronsson, J., Carlsson, O., Linander, H., Ohlsson, F., Petersson, C., Persson, D.: Geometric deep learning and equivariant neural networks. Artificial Intelligence Review  \textbf{56}(12),  14605--14662 (2023)

\bibitem{giannini2024categorical}
Giannini, F., Fioravanti, S., Barbiero, P., Tonda, A., Li{\`o}, P., Di~Lavore, E.: Categorical foundation of explainable ai: A unifying theory. In: World Conference on Explainable Artificial Intelligence. pp. 185--206. Springer (2024)

\bibitem{GuidottiMRTGP19}
Guidotti, R., Monreale, A., Ruggieri, S., Turini, F., Giannotti, F., Pedreschi, D.: A survey of methods for explaining black box models. {ACM} Comput. Surv.  \textbf{51}(5),  93:1--93:42 (2019)

\bibitem{HarderBP20}
Harder, F., Bauer, M., Park, M.: Interpretable and differentially private predictions. In: {AAAI}. pp. 4083--4090. {AAAI} Press (2020)

\bibitem{HeidariTR23}
Heidari, F., Taslakian, P., Rabusseau, G.: Explaining graph neural networks using interpretable local surrogates. In: {TAG-ML}. Proceedings of Machine Learning Research, vol.~221, pp. 146--155. {PMLR} (2023)

\bibitem{HoffmanK17}
Hoffman, R.R., Klein, G.: Explaining explanation, part 1: Theoretical foundations. {IEEE} Intell. Syst.  \textbf{32}(3),  68--73 (2017)

\bibitem{HoffmanMK17}
Hoffman, R.R., Mueller, S.T., Klein, G.: Explaining explanation, part 2: Empirical foundations. {IEEE} Intell. Syst.  \textbf{32}(4),  78--86 (2017)

\bibitem{Kay23}
Kay, J.: Foundations for human-ai teaming for self-regulated learning with explainable {AI} {(XAI)}. Comput. Hum. Behav.  \textbf{147},  107848 (2023)

\bibitem{LualdiSS22}
Lualdi, P., Sturm, R., Siefkes, T.: Exploration-oriented sampling strategies for global surrogate modeling: {A} comparison between one-stage and adaptive methods. J. Comput. Sci.  \textbf{60},  101603 (2022)

\bibitem{marconato2023interpretability}
Marconato, E., Passerini, A., Teso, S.: Interpretability is in the mind of the beholder: A causal framework for human-interpretable representation learning. Entropy  \textbf{25}(12), ~1574 (2023)

\bibitem{MirzaeiMAW24}
Mirzaei, S., Mao, H., Al{-}Nima, R.R.O., Woo, W.L.: Explainable {AI} evaluation: {A} top-down approach for selecting optimal explanations for black box models. Inf.  \textbf{15}(1), ~4 (2024)

\bibitem{MurdochSLAY19}
Murdoch, W.J., Singh, C., Kumbier, K., Abbasi-Asl, R., Yu, B.: Definitions, methods, and applications in interpretable machine learning. Proceedings of the National Academy of Sciences pp. 22071--22080 (2019). \doi{10.1073/pnas.1900654116}

\bibitem{NautaTPNPSSKS23}
Nauta, M., Trienes, J., Pathak, S., Nguyen, E., Peters, M., Schmitt, Y., Schl{\"{o}}tterer, J., van Keulen, M., Seifert, C.: From anecdotal evidence to quantitative evaluation methods: {A} systematic review on evaluating explainable {AI}. {ACM} Comput. Surv.  \textbf{55}(13s),  295:1--295:42 (2023)

\bibitem{palacio2021xai}
Palacio, S., Lucieri, A., Munir, M., Ahmed, S., Hees, J., Dengel, A.: Xai handbook: towards a unified framework for explainable ai. In: Proceedings of the IEEE/CVF International Conference on Computer Vision. pp. 3766--3775 (2021)

\bibitem{PaniguttiBFGPPR23}
Panigutti, C., Beretta, A., Fadda, D., Giannotti, F., Pedreschi, D., Perotti, A., Rinzivillo, S.: Co-design of human-centered, explainable {AI} for clinical decision support. {ACM} Trans. Interact. Intell. Syst.  \textbf{13}(4),  21:1--21:35 (2023)

\bibitem{ruhe2023clifford}
Ruhe, D., Brandstetter, J., Forr{\'e}, P.: Clifford group equivariant neural networks. Advances in Neural Information Processing Systems  \textbf{36},  62922--62990 (2023)

\bibitem{satorras2021n}
Satorras, V.G., Hoogeboom, E., Welling, M.: E (n) equivariant graph neural networks. In: International conference on machine learning. pp. 9323--9332. PMLR (2021)

\bibitem{Schulze-Weddige21}
Schulze{-}Weddige, S., Zylowski, T.: User study on the effects explainable {AI} visualizations on non-experts. In: ArtsIT. Lecture Notes of the Institute for Computer Sciences, Social Informatics and Telecommunications Engineering, vol.~422, pp. 457--467. Springer (2021)

\bibitem{selinger2010survey}
Selinger, P.: A survey of graphical languages for monoidal categories. In: New structures for physics, pp. 289--355. Springer (2010)

\bibitem{tull2024towards}
Tull, S., Lorenz, R., Clark, S., Khan, I., Coecke, B.: Towards compositional interpretability for xai. arXiv preprint arXiv:2406.17583  (2024)

\bibitem{WagnerKGHWB19}
Wagner, J., K{\"{o}}hler, J.M., Gindele, T., Hetzel, L., Wiedemer, J.T., Behnke, S.: Interpretable and fine-grained visual explanations for convolutional neural networks. In: {CVPR}. pp. 9097--9107. Computer Vision Foundation / {IEEE} (2019)

\bibitem{worrall2017harmonic}
Worrall, D.E., Garbin, S.J., Turmukhambetov, D., Brostow, G.J.: Harmonic networks: Deep translation and rotation equivariance. In: Proc. IEEE Conf. on Computer Vision and Pattern Recognition (CVPR). vol.~2, pp. 7168--7177 (2017)

\bibitem{YangWWCHLLYWGAK23}
Yang, W., Wei, Y., Wei, H., Chen, Y., Huang, G., Li, X., Li, R., Yao, N., Wang, X., Gu, X., Amin, M.B., Kang, B.: Survey on explainable {AI:} from approaches, limitations and applications aspects. Hum. Centric Intell. Syst.  \textbf{3}(3),  161--188 (2023)

\bibitem{ZhukovBG23}
Zhukov, A., Benois{-}Pineau, J., Giot, R.: Evaluation of explanation methods of {AI} - cnns in image classification tasks with reference-based and no-reference metrics. Adv. Artif. Intell. Mach. Learn.  \textbf{3}(1),  620--646 (2023)

\end{thebibliography}

\appendix

\section{Additional Results and Proofs}
\label{app:proofs}
\begin{proposition}
\label{prop:topo}
Let $(X,d_X,{\B G}, *)$ be a perception pair. The followings hold.
\begin{itemize}
    \item[(a)] $(\B G,\circ)$ is a topological group.
    \item[(b)] The action of $\B G$ on $X$ is continuous.
\end{itemize}
\end{proposition}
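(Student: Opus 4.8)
The plan is to carry out everything through the pseudo-metric $d_G(g_1,g_2):=\sup_{x\in X}d_X(g_1*x,g_2*x)$ introduced just before the statement, which topologises $G$ (as an extended pseudo-metric space, since $d_X$ and hence $d_G$ may take the value $+\infty$). First I would verify that $d_G$ is indeed a pseudo-metric: reflexivity and symmetry are immediate from the corresponding properties of $d_X$, and the triangle inequality follows by applying (T) for $d_X$ pointwise in $x$ and then taking the supremum, using the subadditivity of $\sup$. The single structural fact driving both parts is that $d_G$ is \emph{bi-invariant}, and this is where the isometry axiom $d_X(g*x_1,g*x_2)=d_X(x_1,x_2)$ of a perception space does all the work.

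For part (a) I would first establish bi-invariance. Left-invariance is direct: $d_G(h\circ g_1,h\circ g_2)=\sup_x d_X(h*(g_1*x),h*(g_2*x))=\sup_x d_X(g_1*x,g_2*x)=d_G(g_1,g_2)$, the middle equality being the isometry axiom applied with the group element $h$. Right-invariance needs the reindexing $y:=h*x$: since $h$ is invertible in $G$, the map $y\mapsto h*y$ is a bijection of $X$, so the supremum over $x$ equals the supremum over $y$, giving $d_G(g_1\circ h,g_2\circ h)=d_G(g_1,g_2)$. From bi-invariance, continuity of multiplication drops out of the estimate $d_G(g_1\circ g_2,h_1\circ h_2)\le d_G(g_1\circ g_2,g_1\circ h_2)+d_G(g_1\circ h_2,h_1\circ h_2)=d_G(g_2,h_2)+d_G(g_1,h_1)$, obtained by inserting the intermediate point $g_1\circ h_2$ and applying left- and right-invariance; this shows multiplication is non-expansive for the sum pseudo-metric on $G\times G$, hence continuous. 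Continuity of inversion is even cleaner: applying left-invariance by $g$ and right-invariance by $h$ yields $d_G(g^{-1},h^{-1})=d_G(\id_G,g\circ h^{-1})=d_G(h,g)=d_G(g,h)$, so inversion is an isometry and in particular continuous. Together these give that $(\B G,\circ)$ is a topological group.

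For part (b) I would equip $G\times X$ with the product topology, induced e.g.\ by the sum extended pseudo-metric, and fix $(g_0,x_0)$. The triangle inequality gives $d_X(g*x,g_0*x_0)\le d_X(g*x,g_0*x)+d_X(g_0*x,g_0*x_0)$; the second summand equals $d_X(x,x_0)$ by the isometry axiom applied with $g_0$, while the first is bounded by $\sup_y d_X(g*y,g_0*y)=d_G(g,g_0)$ directly from the definition of $d_G$. Hence $d_X(g*x,g_0*x_0)\le d_G(g,g_0)+d_X(x,x_0)$, which shows the action is non-expansive (jointly Lipschitz) and therefore continuous.

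I expect the only genuine subtlety to be the right-invariance step, where one must justify the reindexing of the supremum by the bijectivity of $y\mapsto h*y$; besides this, some care is needed throughout to keep all inequalities valid in the extended half-line $[0,\infty]$, since $d_G$ may be infinite, but none of the estimates above require finiteness.
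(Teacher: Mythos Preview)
Your proof is correct and rests on the same metric inequalities as the paper's, but you organise the argument around a cleaner intermediate lemma. The paper verifies continuity of multiplication, inversion, and the action by direct sequential computations, each time unpacking the isometry axiom $d_X(g*x_1,g*x_2)=d_X(x_1,x_2)$ inline; for inversion, for instance, it chains $d_X(g_i^{-1}*x,g^{-1}*x)=d_X(x,(g_i\circ g^{-1})*x)=d_X(g*(g^{-1}*x),g_i*(g^{-1}*x))\le d_G(g,g_i)$ to obtain only an inequality. By isolating bi-invariance of $d_G$ first (your right-invariance step via the bijection $x\mapsto h*x$ is correct, since $h^{-1}*$ is its inverse), you obtain the same multiplication estimate more transparently and the sharper fact that inversion is an \emph{isometry}, $d_G(g^{-1},h^{-1})=d_G(g,h)$. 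Your Lipschitz bound for the action, $d_X(g*x,g_0*x_0)\le d_G(g,g_0)+d_X(x,x_0)$, coincides exactly with the paper's estimate. So the two proofs are equivalent in substance; yours just packages the invariance properties as a reusable lemma rather than rederiving them at each step.
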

\begin{proof}
To prove (a) it is sufficient to prove that the maps 
$(g',g'')\mapsto g'\circ g''$   
and $g\mapsto g^{-1}$
are continuous.
First of all, we have to prove that if a sequence $(g_i')$ converges to $g'$
and a sequence $(g_i'')$ converges to $g''$
in $G$, then the sequence $(g_i'\circ g_i'')$ converges to $g'\circ g''$
    in $G$. 
%
We observe that, for every $x\in X$,
{\small \begin{align*}
d_X((g_i'\circ g_i'')*x,(g'\circ g'')*x)&= d_X(g_i'*( g_i''*x),g'*( g''*x))\\
&\le d_X(g_i'*( g_i''*x),g_i'*( g''*x))\\
&+d_X(g_i'*( g''*x),g'*( g''*x))\\
&= d_X(g_i''*x,g''*x)\\
&+d_X(g_i'*( g''*x),g'*( g''*x))\\
&\le d_G(g_i'',g'')+d_G(g_i',g').
\end{align*}}
Thus,
$d_G(g_i'\circ g_i'',g'\circ g'')\le d_G(g_i'',g'')+d_G(g_i',g')$.
This proves the first property.
Then, we have to prove that if a sequence $(g_i)$ converges to $g$
in $G$, then the sequence $(g_i^{-1})$ converges to $g^{-1}$ in $G$. We have that
{\small
\begin{align*}
d_X(g_i^{-1}*x,g^{-1}*x)&= d_X(g_i*(g_i^{-1}*x),g_i*(g^{-1}*x))\\
&= d_X((g_i\circ g_i^{-1})*x,(g_i\circ g^{-1})*x)\\
&= d_X(x,(g_i\circ g^{-1})*x)\\
&= d_X((g\circ g^{-1})*x,(g_i\circ g^{-1})*x)\\
&= d_X(g*(g^{-1}*x),g_i*(g^{-1}*x))\\
&\le d_G(g,g_i).
\end{align*}}
Therefore,
$d_G(g_i^{-1},g^{-1})\le d_G(g,g_i)$.
This proves our second property.

Now we prove (b).
    We have to prove that if a sequence $(x_i)$ converges to $x$
    in $X$ and a sequence $(g_i)$ converges to $g$
    in $G$, then the sequence $(g_i*x_i)$ converges to $g*x$
    in $X$. Since $\lim_{i\to\infty} x_i=x$ and $\lim_{i\to\infty} g_i=g$, then 
    $\lim_{i\to\infty} d_X(x_i,x)=0$ and 
    $\lim_{i\to\infty} d_X(g_i*x,g*x)=0$. 
    We have that, for every $x\in X$,
    {\small \begin{align*}
d_X(g_i*x_i,g*x) &\le d_X(g_i*x_i,g_i*x)+d_X(g_i*x,g*x)\\
&= d_X(x_i,x)+d_X(g_i*x,g*x)\\
&\le d_X(x_i,x)+d_G(g_i,g).
\end{align*}}
\end{proof}


%
%
\paragraph{Semantics of diagrams.}
It is convenient to first fix some notation.
\begin{remark}[Notation]\label{rem:notation}
Given two sets $X$ and $Y$, we write $X\times Y$ for their Cartesian product 
and $\sigma_{X,Y}\colon X \times Y \to Y \times X$ for the symmetry function mapping $(x,y)\in X\times Y$ into $(y,x)\in Y\times X$; given two functions $f_1\colon X_1 \to Y_1$ and $f_2\colon X_2 \to Y_2$, we write $f_1\times f_2 \colon X_1\times X_2 \to Y_1 \times Y_2$ for the function mapping $(x_1,x_2)\in X_1\times X_2$ into $(f(x_1),f(x_2))\in Y_1\times Y_2$; Given $f\colon X \to Y$ and $g\colon Y \to Z$, we write $g\circ f \colon X \to Z$ for their composition. For an arbitrary set $X$, we write $\id_{X}\colon X \to X$ for the identity function, and $\Delta_X \colon X \to X \times X$ for the copier function mapping $x\in X$ into $(x,x)\in X \times X$; We write $1$ for a singleton set that we fix to be $\{\star \}$ and $!_X\colon X \to 1$ for the function mapping any $x\in X$ into $\star$. 

Given two perception spaces $(X,G)$ and $(Y,K)$, their direct product written $(X,G) \otimes (Y,K)$ is the perception space $(X\times Y,G\times K)$, where the distance on $X\times Y$ is defined as $d_{X\times Y} ((x_1,y_1) \, , \, (x_2,y_2)):=\max\{d_X(x_1,x_2) \, , \, d_{Y}(y_1,y_2) \}$ while the group action is defined pointwise, that is $(g,k)*(x, y)=(g*x, k*y)$. We write $\sigma_{(X,G) , (Y,K)} \colon (X,G) \otimes (Y,K) \to  (Y,K) \otimes (X,G)$ as $(\sigma_{X, Y}, \sigma_{G,K})$.
\end{remark}

With this notation one can extend the above structures of sets and functions to perception spaces and GEOs as illustrated in Table \ref{TableGeoAlgebra}. By simply checking that the definitions in Table \ref{TableGeoAlgebra} provide GEOs, one can prove the following result.

\begin{table}[t]
\[\arraycolsep=1pt\begin{array}{rcl}
\id_{X,G} &\defeq& (\id_X, \id_G) \colon (X,G) \to (X,G)\\
\Delta_{X, G} &\defeq& (\Delta_{X},\Delta_G) \colon (X,G) \to (X,G)\otimes (X,G)\\
!_{X,G} &\defeq& (!_{X},!_G) \colon (X,G) \to (1,1)\\
\sigma_{(X,G) , (Y,K)} &\defeq& (\sigma_{X, Y}, \sigma_{G,K}) \colon (X,G) \otimes (Y,K) \to  (Y,K) \otimes (X,G)\\
(f,t)\circ (f',t') &\defeq& (f'\circ f,t'\circ t)\colon (X,G) \to (Z,L)\\
(f_1,t_1) \otimes (f_2,t_2) &\defeq& (f_1\times f_2, t_1\times t_2)\colon (X_1,G_1)\otimes (X_2,G_2)  \to (Y_1,K_1) \otimes (Y_2,K_2)
\end{array}
\]
\caption{The CD category of GEOs. Above $(f,t)\colon (X,G) \to (Y,K)$, $(f',t')\colon (Y,K) \to (Z,L)$ and $(f_1,t_1)\colon (X_1,G_1) \to (Y_1,K_1)$, $(f_2,t_2)\colon (X_2,G_2) \to (Y_2,K_2)$ are GEOs. The notation on the right hand side is in \Cref{rem:notation}}\label{TableGeoAlgebra}.
\end{table}

\begin{lemma}$\Cat{GEO}$ is a CD category in the sense of \cite{ChoJ19}. 
\end{lemma}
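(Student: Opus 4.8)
The statement to prove is that $\Cat{GEO}$, with the structure displayed in Table~\ref{TableGeoAlgebra}, is a CD category in the sense of \cite{ChoJ19}. The plan is to verify the defining axioms of a CD category piece by piece, reducing each one to the corresponding already-known fact about the category $\Set$ of sets and functions, applied componentwise to the two coordinates $(f,t)$ of a GEO. The key observation that makes this routine is that every piece of structure on $\Cat{GEO}$ in Table~\ref{TableGeoAlgebra} is defined coordinatewise: objects are pairs $(X,G)$, the monoidal product is $(X\times Y, G\times K)$, and each structural morphism ($\id$, $\sigma$, $\Delta$, $!$) is a pair of the corresponding structural morphisms in $\Set$. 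Hence every equation required of a CD category is a pair of equations, each of which holds in $\Set$ because $(\Set,\times,1)$ is the prototypical CD (indeed cartesian) category.

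First I would check that $\Cat{GEO}$ is a well-defined category: composition $(f,t)\circ(f',t') := (f'\circ f, t'\circ t)$ lands in GEOs (this is the ``simply checking'' remark preceding the lemma — one verifies the equivariance square and that the composite of group homomorphisms is a group homomorphism), associativity and unit laws are inherited from $\Set$ coordinatewise. Next I would check that $\otimes$ is a (strict) symmetric monoidal structure: that $(f_1,t_1)\otimes(f_2,t_2)$ is again a GEO (equivariance of the product square, and $t_1\times t_2$ a homomorphism of product groups), bifunctoriality (interchange law), strict associativity and unitality of $\otimes$ on objects and morphisms, and that $\sigma_{(X,G),(Y,K)}$ is a GEO satisfying naturality, the symmetry axiom $\sigma\circ\sigma=\id$, and the hexagon — each reducing to the same identity for $\sigma_{X,Y}$ and $\sigma_{G,K}$ in $\Set$. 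Then I would verify the comonoid axioms making $(X,G)$ a commutative comonoid with comultiplication $\Delta_{X,G}=(\Delta_X,\Delta_G)$ and counit $!_{X,G}=(!_X,!_G)$: coassociativity, counitality, cocommutativity — all componentwise instances of the fact that $\Delta_X,!_X$ make $X$ a commutative comonoid in $\Set$ — together with the compatibility of $\Delta$ and $!$ with $\otimes$ (i.e. $\Delta_{(X,G)\otimes(Y,K)}$ agrees with $(\Delta_{X,G}\otimes\Delta_{Y,K})$ post-composed with the middle-four interchange, and similarly for $!$), which is exactly the condition that $\Delta,!$ form a monoidal natural transformation, again checked coordinatewise in $\Set$.

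The one point that is genuinely specific to $\Cat{GEO}$ rather than inherited for free from $\Set$ is well-definedness: namely that each structural morphism actually \emph{is} a GEO, i.e.\ that the pair of $\Set$-maps is equivariant for the relevant (product) group actions. For instance $\Delta_{X,G}$ being a GEO amounts to $\Delta_X(g*x)=(g,g)*\Delta_X(x)$ in $X\times X$, which holds since the action on $X\times X$ is pointwise; similarly for $!_{X,G}$, $\sigma$, and the fact that $\otimes$ and $\circ$ of GEOs are GEOs (the latter being the content of the remark ``by simply checking that the definitions in Table~\ref{TableGeoAlgebra} provide GEOs''). I expect this bookkeeping — confirming equivariance of each gadget and confirming that the product of two perception spaces is again a perception space in the sense of \Cref{defpp} (the distance $d_{X\times Y}=\max\{d_X,d_Y\}$ is a pseudo-metric invariant under the pointwise action, because the max of two invariant pseudo-metrics is an invariant pseudo-metric) — to be the only mildly non-trivial part, and even it is entirely mechanical. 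Once well-definedness is in hand, all CD-category equations follow by projecting to each coordinate and invoking the corresponding law in $\Set$. I would therefore organise the proof as: (i) $(X,G)\otimes(Y,K)$ is a perception space; (ii) $\circ$, $\otimes$, $\id$, $\sigma$, $\Delta$, $!$ are GEOs; (iii) the category, symmetric-monoidal, and comonoid/CD axioms hold, each reduced coordinatewise to $\Set$. The main obstacle is purely one of exposition — there are many small diagrams — rather than any conceptual difficulty.
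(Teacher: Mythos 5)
Your plan is correct and matches the paper's (very terse) argument: the paper simply remarks that one checks the definitions in Table~\ref{TableGeoAlgebra} yield GEOs, after which the CD-category axioms are inherited coordinatewise from $\Set$, exactly as you describe. Your identification of the only substantive checks — that the product of perception spaces is a perception space (invariance of the max pseudo-metric under the pointwise action) and that each structural map is equivariant — is precisely the content the paper leaves implicit.
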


From this fact, and the observation that $\Cat{Diag}_\Gamma$ is the (strict)  CD category freely generated from the monoidal signature $\Gamma$, one obtains that, for each interpretation $\mathcal{I}$, there exists a unique CD functor $\sem{-}\colon {\bf Diag} \to {\bf GEO}$ extending $\mathcal{I}$. Its inductive definition is illustrated in \Cref{tab:semantics}

\begin{table}[t]
\[
\begin{array}{rclrclrcl}
\sem{\nbox{g}[][][][]} & := & \mathcal{I}(g) \;&\;
\sem{\emptyCirc{}} & := & \id_{1,1} \;&\;
\sem{c_1\otimes c_2} & := & \sem{c_1} \otimes \sem{c_2} \\
\sem{\copyCirc[A]} & := & \Delta_{\mathcal{I_S}(A)} \;&\;
\sem{\symmCirc[A][B]} & := & \sigma_{\mathcal{I_S}(A),\mathcal{I_S}(B)} \;&\;
\sem{c_1\circ c_2} & := & \sem{c_1} \circ \sem{c_2} \\
\sem{\idCirc[A]} & := & id_{\mathcal{I_S}(A)} \;&\;
\sem{\discardCirc[A]} & := & !_{\mathcal{I_S}(A)} 
\end{array}
\]
\caption{The semantics $\sem{-}\colon {\bf Diag} \to {\bf GEO}$ for an interpretation $\mathcal{I}$. Operations and constants occurring on the right hand side of the above equations are those in \Cref{TableGeoAlgebra}. Above $\mathcal{I_S}$ is a function mapping each $A\in \mathcal{S}$ in a perception space such that, for all $g\in \Gamma$ with arity $A_1 \otimes \dots \otimes A_n$ and coarity $B_1 \otimes \dots \otimes B_m$, the source of $\mathcal{I}(g)$ is  $\bigotimes_{i=1}^n \mathcal{I_S}(A_i)$ and its target is  $\bigotimes_{j=1}^m\mathcal{I_S}(B_j)$.}\label{tab:semantics}
\end{table}


%
%
\paragraph{Cost of translation pairs for infinite perception spaces.}
Here we explain how the cost of translation pairs defined in \eqref{eq:loss} can be defined for arbitrary sets $X_\alpha$.

To proceed, we need to equip each metric space 
$X_\alpha$ with a Borel probability measure $\mu_\alpha$, in the spirit of \cite{CaFrQuSa23}. 
In simple terms, the measure $\mu_\alpha$ represents the probability of each data point in $X_\alpha$ appearing in our experiments.
We will assume that all GENEOs in $\mathbf{T}$ are not just distance-decreasing (i.e., non-expansive) but also  \emph{measure-decreasing}, i.e., if 
$(l_{\alpha,\beta},p_{\alpha,\beta}):(X_\alpha,G_\alpha)\to (X_\beta,G_\beta)$ belongs to $\mathbf{T}$ and the set $A\subseteq X_\alpha$ is  measurable for $\mu_\alpha$, then $l_{\alpha,\beta}(A)$ is measurable for $\mu_\beta$, and 
$\mu_\beta(l_{\alpha,\beta}(A))\le\mu_\alpha(A)$. Moreover, we assume that  the function $f_{\alpha,\beta}:X_\alpha\to \R$, defined  for every $x\in X_\alpha$ as
$f_{\alpha,\beta}(x):=d_{\mathrm{dt}}\Big((m_{\beta,\alpha}\circ f_\beta\circ l_{\alpha,\beta})(x),
f_\alpha(x)\Big)$, is integrable with respect to $\mu_\alpha$.
\begin{definition}
Let $\pi=\Big((l_{\alpha,\beta},p_{\alpha,\beta}),(m_{\beta,\alpha},q_{\beta,\alpha})\Big)$ be a crossed translation pair from $(f_\alpha,t_\alpha)\colon (X_\alpha,G_\alpha)\to (Y_\alpha,K_\alpha)$ to $(f_\beta,t_\beta)\colon (X_\beta,G_\beta)\to (Y_\beta,K_\beta)$.
The \emph{functional cost} of $\pi$, written $\cost(\pi)$, is defined as follows.
\begin{equation}
\label{eq:loss2}
    \cost(\pi) = \int_{X_\alpha}
d_{\mathrm{dt}}\Big((m_{\beta,\alpha}\circ f_\beta\circ l_{\alpha,\beta})(x),
f_\alpha(x)\Big)\ d\mu_\alpha.
\end{equation} 
\end{definition}

\paragraph{Proof of \Cref{prop:semi-metric}. } For  sake of generality, we illustrate the proof for the  case 
where $\cost(\pi)$ is defined as in \eqref{eq:loss2}. The case of $\cost(\pi)$ as in \eqref{eq:loss} follows by fixing 

\noindent\begin{minipage}{0.50\textwidth} $\mu_\alpha$ as  uniform Borel measure. Let us prove that $h_{\mb{O}}$ enjoys the triangle inequality, i.e., $h_{\mb{O}}(\alpha,\gamma)\leq h_{\mb{O}}(\alpha,\beta)+h_{\mb{O}}(\beta,\gamma)$, where $\alpha$, $\beta$ and $\gamma$ are three 
\end{minipage}
\begin{minipage}{0.50\textwidth}
\[
\begin{array}{rcl}
\alpha\defeq(f_\alpha,t_\alpha)\colon(X_\alpha,G_\alpha)\to (Y_\alpha,K_\alpha)\\
\qquad \beta \defeq (f_\beta,t_\beta)\colon(X_\beta,G_\beta)\to (Y_\beta,K_\beta)\\
\qquad \gamma\defeq (f_\gamma,t_\gamma)\colon(X_\gamma,G_\gamma)\to (Y_\gamma,K_\gamma)
\end{array}
\]\end{minipage}
GEOs in $\mathcal{G}$ illustrated  on the right. We consider three translation pairs:
\[\begin{array}{rcl}
\pi_1 &\defeq&\Big((l_{\alpha,\beta},p_{\alpha,\beta}),(m_{\beta,\alpha},q_{\beta,\alpha})\Big)\colon \alpha \leftrightharpoons_{\mathbf{T}} \beta\\
\pi_2&\defeq&\Big((l_{\beta,\gamma},p_{\beta,\gamma}),(m_{\gamma,\beta},q_{\gamma,\beta})\Big)\colon \beta \leftrightharpoons_{\mathbf{T}} \gamma\\
\pi_3&\defeq&\pi_2\circ \pi_1 = \Big((l_{\beta,\gamma}\circ l_{\alpha,\beta},p_{\beta,\gamma}\circ p_{\alpha,\beta}),(m_{\beta,\alpha}\circ m_{\gamma,\beta},q_{\beta,\alpha}\circ q_{\gamma,\beta})\Big)\colon \beta \leftrightharpoons_{\mathbf{T}} \gamma
\end{array}\]
Please note that if no crossed pair like $\pi_1$ or  $\pi_2$ exists, then $h_{\mb{O}}(\alpha,\beta)+h_{\mb{O}}(\beta,\gamma)=\infty$, and hence the triangle inequality trivially holds. By definition their costs are
\[\begin{array}{rcl}
\cost(\pi_1)& = & \int_{X_\alpha}
d_{\mathrm{dt}}\Big((m_{\beta,\alpha}\circ f_\beta\circ l_{\alpha,\beta})(x),
f_\alpha(x)\Big)\ d\mu_\alpha\\
\cost(\pi_2) & =& \int_{X_\beta}
d_{\mathrm{dt}}\Big((m_{\gamma,\beta}\circ f_\gamma\circ l_{\beta,\gamma})(y),
f_\beta(y)\Big)\ d\mu_\beta\\
\cost(\pi_3) & = &\int_{X_\alpha}
d_{\mathrm{dt}}\Big((m_{\beta,\alpha}\circ m_{\gamma,\beta}\circ f_\gamma\circ l_{\beta,\gamma}\circ l_{\alpha,\beta})(x),
f_\alpha(x)\Big)\ d\mu_\alpha
\end{array}
\]
\noindent Since $(m_{\beta,\alpha},q_{\beta,\alpha})$ is a GENEO, we have that for every $y\in X_\beta$,
\[d_{\mathrm{dt}}\Big((m_{\gamma,\beta}\circ f_\gamma\circ l_{\beta,\gamma})(y),
f_\beta(y)\Big)\ge d_{\mathrm{dt}}\Big((m_{\beta,\alpha}\circ m_{\gamma,\beta}\circ f_\gamma\circ l_{\beta,\gamma})(y),
(m_{\beta,\alpha}\circ f_\beta)(y)\Big)\]
and hence, setting $y:=l_{\alpha,\beta}(x)$
and recalling that $l_{\alpha,\beta}$ is measure-decreasing,
\begin{align*}
&\int_{X_\beta}d_{\mathrm{dt}}\Big((m_{\gamma,\beta}\circ f_\gamma\circ l_{\beta,\gamma})(y),
f_\beta(y)\Big)\ d\mu_\beta\\
&\ge \int_{X_\alpha} d_{\mathrm{dt}}\Big((m_{\beta,\alpha}\circ m_{\gamma,\beta}\circ f_\gamma\circ l_{\beta,\gamma}\circ l_{\alpha,\beta})(x),
(m_{\beta,\alpha}\circ f_\beta(y)\circ l_{\alpha,\beta})(x)\Big)\ d\mu_\alpha.
\end{align*}
Therefore, we have that $\cost(\pi_1)+\cost(\pi_2)=$
%
\[\begin{array}{ll}

=& \int_{X_\alpha} d_{\mathrm{dt}}\Big((m_{\beta,\alpha}\circ f_\beta\circ l_{\alpha,\beta})(x),
f_\alpha(x)\Big)\ d\mu_\alpha +\int_{X_\beta}d_{\mathrm{dt}}\Big((m_{\gamma,\beta}\circ f_\gamma\circ l_{\beta,\gamma})(y),
f_\beta(y)\Big)\ d\mu_\beta\\
\ge& \int_{X_\alpha} d_{\mathrm{dt}}\Big((m_{\beta,\alpha}\circ f_\beta\circ l_{\alpha,\beta})(x),
f_\alpha(x)\Big)\ d\mu_\alpha\\
&+\int_{X_\alpha} d_{\mathrm{dt}}\Big((m_{\beta,\alpha}\circ m_{\gamma,\beta}\circ f_\gamma\circ l_{\beta,\gamma}\circ l_{\alpha,\beta})(x), (m_{\beta,\alpha}\circ f_\beta \circ l_{\alpha,\beta})(x)\Big)\ d\mu_\alpha\\
\ge& \int_{X_\alpha} d_{\mathrm{dt}}\Big((m_{\beta,\alpha}\circ m_{\gamma,\beta}\circ f_\gamma\circ l_{\beta,\gamma}\circ l_{\alpha,\beta})(x),f_\alpha(x)\Big)\ d\mu_\alpha=\cost(\pi_2\circ \pi_1)
\end{array}\]
where the second to last inequality follows from the triangle inequality for $d_{\mathrm{dt}}$.
%
Therefore, $\cost(\pi_1)+\cost(\pi_2)\ge\cost(\pi_2\circ \pi_1)$.
It follows that
\begin{align*}
&\inf \{\cost(\pi') \mid \pi'\colon \alpha \leftrightharpoons_{\mathbf{T}} \beta\}+ \inf \{\cost(\pi'') \mid \pi'' \colon \beta\leftrightharpoons_{\mathbf{T}} \gamma\}\\
&=\inf\{\cost(\pi')+\cost(\pi'') \mid \pi'\colon \alpha \leftrightharpoons_{\mathbf{T}} \beta, \pi'' \colon \beta\leftrightharpoons_{\mathbf{T}} \gamma \}\\
&\ge\inf \{\cost(\pi''\circ \pi') \mid \pi'\colon \alpha \leftrightharpoons_{\mathbf{T}} \beta,\pi''\colon \beta \leftrightharpoons_{\mathbf{T}} \gamma\}\\
&\ge \inf\{\cost(\pi) \mid \pi\colon \alpha \leftrightharpoons_{\mathbf{T}} \gamma\}
\end{align*}
and thus 
$h_{\mb{O}}(\alpha,\beta)+h_{\mb{O}}(\beta,\gamma)
\ge h_{\mb{O}}(\alpha,\gamma)$.
In other words, (T) holds.

To prove (R) i.e., that for all GEOs $(f_\alpha,t_\alpha)\colon (X_\alpha,G_\alpha) \to (Y_\alpha, K_\alpha)$, it holds that  $h_{\mb{O}}\Big((f_\alpha,t_\alpha),(f_\alpha,t_\alpha)\Big)=0$, observe that, since ${\mathbf{T}}$ is a category there exists the crossed pair of translation 
$\iota\defeq \Big( (\id_{X_\alpha},\id_{G_\alpha}), (\id_{Y_\alpha},\id_{K_\alpha})\Big)$
given by the identity morphisms. One can easily check that $\cost(\iota)=0$ and thus \[\inf \{\cost(\pi) \mid \pi\colon (f_\alpha,t_\alpha)\leftrightharpoons_{\mathbf{T}}(f_\alpha,t_\alpha)\}=0\text{.}\]

\paragraph{Proof of Proposition \ref{th:upper_bound}.}
Fix $A\defeq\{(g,x)\mid f_{\alpha}(x)=f_\beta(x)\}$, 
$B\defeq\{(g,x)\mid f_{\alpha}(g*x)= f_\beta(g*x)\}$ and 
$C\defeq\{(g,x)\mid f_\beta(x)= f_\beta(g*x)\}$ and observe that $A\cap B \subseteq C$. Thus, by denoting with $\overline{X}$, the complement of a set $X$, it holds that $\overline{A} \cup \overline{B}\supseteq \overline{C}$ and thus
\begin{equation}\label{eq:proof theorem 2}
|\overline{A}|+|\overline{B}| \geq |\overline{C}|\text{.}
\end{equation}
We now use the hypothesis that $G_\alpha$ is a group, to show the bijection of $\overline{A}$ and $\overline{B}$: define $\iota\colon\overline{B}\to \overline{A}$ as $\iota(g,x)\defeq (g,g*x)$ and $\kappa\colon \overline{A} \to \overline{B}$ as $\kappa(g,x)\defeq (g,g^{-1}*x)$. Observe that the functions are well defined and that they are inverse to  each other. Thus $|\overline{A}|=|\overline{B}|$ that, thanks to \eqref{eq:proof theorem 2} gives us 
\[2\cdot|\overline{A}| \geq |\overline{C}|\text{.}\]
To conclude observe that $\overline{C}$ is $NE$ and that $|\overline{A}|$ is $|G_\alpha|\cdot h_{\mathbb{O}}((f_\alpha,t_\alpha),(f_\beta,t_\beta))$.

\end{document}